\newtheorem{theorem}{Theorem}
\newtheorem{remark}{Remark}
\newtheorem{lemma}{Lemma}
\newtheorem{definition}{Definition}
\newtheorem{formulation}{Formulation}
\newtheorem{proposition}{Proposition}
\DeclareMathAlphabet\mathbfcal{OMS}{cmsy}{b}{n}
\newcommand{\mat}[1]{\mathbf{#1}}
\newcommand{\vect}[1]{\boldsymbol{#1}}
\title{Activation-Informed Pareto-Guided Low-Rank Compression for Efficient LLM/VLM}
\author{
\textbf{Ryan Solgi\textsuperscript{1}, Parsa Madinei\textsuperscript{1}, Jiayi Tian\textsuperscript{1},
Rupak Swaminathan\textsuperscript{2},} \\ \textbf{Jing Liu\textsuperscript{2}, Nathan Susanj\textsuperscript{2}, Zheng Zhang\textsuperscript{1}} \\
\textsuperscript{1}University of California-Santa Barbara, USA \\
\textsuperscript{2}Amazon, USA \\
\texttt{solgi@ucsb.edu}, \texttt{zhengzhang@ece.ucsb.edu}
}
\begin{document}

\maketitle

\begin{abstract}
Large language models (LLM) and vision-language models (VLM) have achieved state-of-the-art performance, but they impose significant memory and computing challenges in deployment. We present a novel low-rank compression framework to address this challenge. First, we upper bound the change of network loss via layer-wise activation-based compression errors, filling a theoretical gap in the literature. We then formulate low-rank model compression as a bi-objective optimization and prove that a single uniform tolerance yields surrogate Pareto-optimal heterogeneous ranks. Based on our theoretical insights, we propose Pareto-Guided Singular Value Decomposition (PGSVD), a zero-shot pipeline that improves activation-aware compression via Pareto-guided rank selection and alternating least-squares implementation. We apply PGSVD to both LLM and VLM, showing better accuracy at the same compression levels and inference speedup. 

\end{abstract}

\section{Introduction}
Pre-trained foundation models have achieved state-of-the-art performance in diverse domains~\citep{Vaswani2017,Bommasani2021}. However, their huge memory and computing demands pose significant barriers to efficient deployment on various platforms~\citep{Patterson2021,Strubell2019}. Hence, compressing these models has become an active area of research. Different methods have been studied, including pruning~\citep{Sanh-pruning-2020, someki2025contextaware}, distillation~\citep{Sanh-DistilBERT-2019}, quantization~\citep{Shen-qbert-2020}, and low-rank factorization~\citep{Lan-albert-2020,Hsu-wsvd-2022,Gao-awsvd-2024}. 

In this work, we investigate zero-shot compression of pre-trained models through low-rank factorization. Zero-shot compression has gained popularity due to its ability to rapidly reduce memory and computation requirements while preserving downstream performance without re-training~\citep{Frantar2022,Dettmers-spqr-2023,Frantar2023}. However layers in a pre-trained model may not exhibit a low-rank property, leading to dramatic accuracy drop in direct factorization. To address this, several studies leverage gradients or activations to guide factorization~\citep{hsu2022fwsvd,yuan2023asvd}. Among activation-based compression methods, SVD-LLM \cite{svdllm_wang2024, svdllmv2_wang2025} factorizes LLM layers using truncation-sensitive data whitening techniques. Despite promising results, previous methods typically apply the same compression ratio across all layers~\cite{svdllm_wang2024}, or apply heuristic rank selection methods with no theoretical guarantee~\citep{svdllmv2_wang2025, zhang2024_lr_features}. Given the large number of layers in LLMs and the cross-modality disparities in multimodal (e.g., vision–language) models~\citep{yang2024ecoflap}, choosing adaptive compression ratios between layers remains a challenge. Furthermore, without sufficient theoretical insights, some layers may be over-compressed or under-compressed, leading to overall performance degradation. 

\begin{figure}
  \centering
\includegraphics[width=0.47\textwidth]{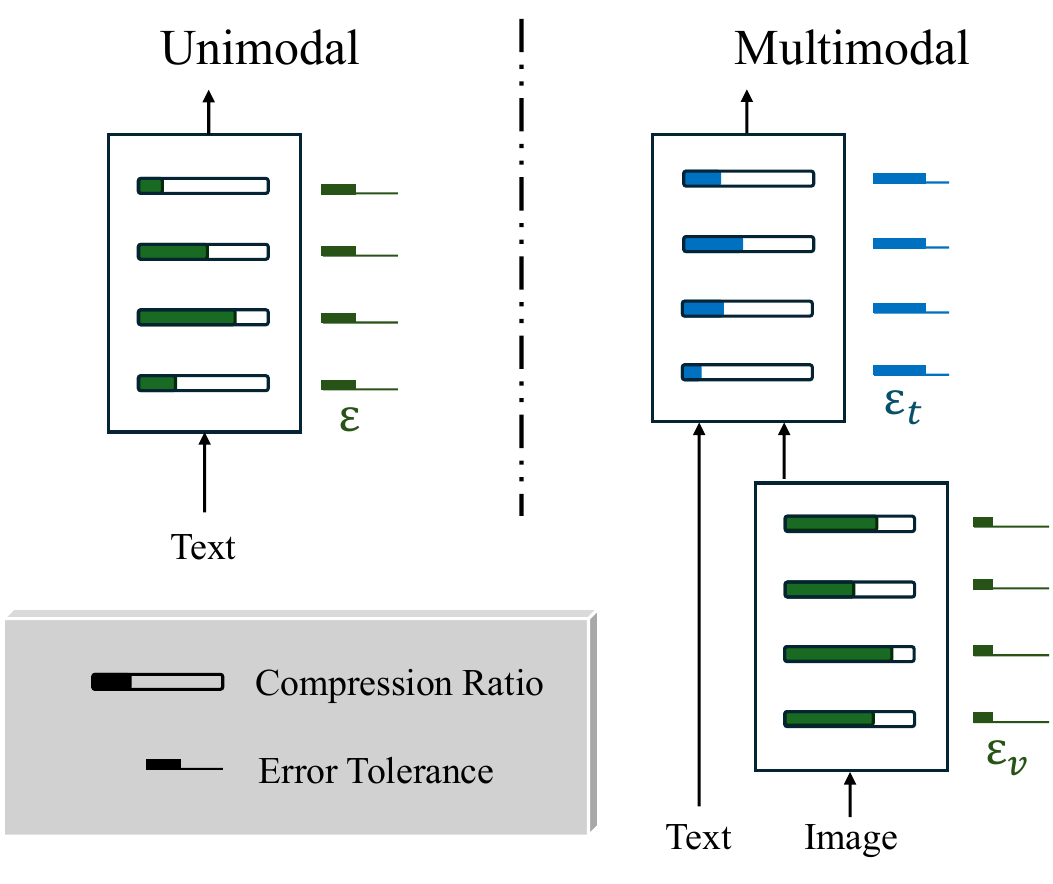}
  \setlength{\abovecaptionskip}{0pt} 
    \caption{Overview of PGSVD: (left) unimodal model using a uniform error tolerance that yields heterogeneous compression ratios; (right) multimodal model with separate uniform tolerances for each tower.}
  \label{fig:low-rank-compression}
  \vspace{-10pt}
\end{figure}

This paper tries to address two key challenges: (i) the absence of a theory that links layer-wise compression to overall model performance, and (ii) the reliance on uniform or heuristic compression ratios.
To overcome these issues, we first reveal how the errors of activation-based layer compression propagate to and impact the overall network loss. Furthermore, from a bi-objective perspective, we formally demonstrate that every uniform error-tolerance allocation is surrogate Pareto-optimal and automatically induces heterogeneous ranks across layers, collapsing a difficult search over many ranks to a single knob as illustrated in Fig.~\ref{fig:low-rank-compression}. These theoretical insights motivate us to propose Pareto-Guided Singular Value Decomposition (PGSVD) to improve the performance of low-rank LLM/VLM compression. Our main contributions can be summarized as follows: 

\begin{itemize}[leftmargin=*]
    \item \textbf{Theoretical Insights about SVD-Based LLM Compression.} We formulate compression–ratio allocation across layers as a bi-objective optimization problem. By linking network loss with layerwise activation-based compression, we rigorously show that a uniform error-tolerance allocation across layers yields a data-agnostic, surrogate-optimal architecture, defining a near Pareto-optimal trade-off between network loss and compression ratio.
    
    \item \textbf{New Compression Algorithms via Pareto-Guided Rank Selections.} We propose PGSVD, a novel zero-shot compression framework that integrates Pareto-guided rank selection with SVD to jointly achieve network-level and layer-wise optimality. PGSVD is further equipped with an alternating least squares solver for efficiently updating low-rank factors based on activations.
 
    \item \textbf{Experimental Results on LLMs and VLMs.}    We demonstrate that PGSVD outperforms prior activation-aware low-rank compression methods across multiple model sizes and datasets, and we further extend the approach to VLMs, highlighting its applicability beyond unimodal architectures. Our experiments show that PGSVD achieves more than a 30\% accuracy improvement over uniform compression ratio assignments with the same memory and inference speedup.

\end{itemize}

\section{Background} 
\paragraph{Activation-Aware Compression.} In low-rank compression of LLMs, the weight matrix of a layer denoted by $\mat{W}\in\mathbb{R}^{N \times M}$ is replaced by its low-rank approximation $\hat{\mat{W}}=\mat{A}\mat{B}$ where $\mat{A}\in \mathbb{R}^{N\times r}$ and $\mat{B}\in \mathbb{R}^{r\times M}$. Traditionally, this compression is done by computing a best low-rank approximation:
\begin{align}
\label{eq:frob_w}
    \min_{\mat{A}\in \mathbb{R}^{N\times r},\; \mat{B}\in \mathbb{R}^{r\times M}} \| \mat{W}-\mat{A}\mat{B}\|^{2}_{F}.
\end{align} 
The optimal solution is $\mat{A}^{\star}=\mat{U}_r\Sigma_r^{1/2}$ and $\mat{B}^{\star}=\Sigma_r^{1/2}\mat{V}_r^T$ where $\mat{U}_r$, $\Sigma_r$, and $\mat{V}_r$ are the truncated SVD factors corresponding to the top-$r$ singular values of $\mat{W}$. However, the weights of a pre-trained LLM/VLM may not have low-rank properties at some layers, which lead to dramatic performance loss in the above compression. In contrast, activations ($\mat{X}$) have exhibited low-rank properties~\citep{zhang2024_lr_features}. Therefore, it is more promising to shift the main objective of the compression paradigm from Eq.~\eqref{eq:frob_w} to the following:
\begin{align}
\label{eq:frob_wx}
    \min_{\mat{A}\in \mathbb{R}^{N\times r},\; \mat{B}\in \mathbb{R}^{r\times M}}\| \mat{W}\mat{X}-\mat{A}\mat{B}\mat{X}\|^{2}_{F}.
\end{align}

\paragraph{SVD-LLM.}  
A reformulation of ~\eqref{eq:frob_wx} based on whitened activations was proposed as follows:
\begin{align}
\label{eq:svd_llm}
\min_{\mat{A}\in \mathbb{R}^{N\times r},\; \mat{B}\in \mathbb{R}^{r\times M}} \quad &
\bigl\| \mat{W}\mat{X}-\mat{A}\mat{B}\mat{T}^{-1}\mat{X}\bigr\|_{F}^{2}
\end{align}
where $\mat{T}$ is Cholesky factor of $\mat{X}\mat{X}^{\top}$~\citep{svdllm_wang2024}. The optimal solution is given by 
$\mat{A}^\star = \bar{\mat{U}}_r \bar{\Sigma}_r^{1/2}$ and $\mat{B}^\star = \bar{\Sigma}_r^{1/2} \bar{\mat{V}}_r^\top$, 
where $\bar{\mat{U}}_r \bar{\Sigma}_r \bar{\mat{V}}r^\top$ is the rank-$r$ SVD of  $\mat{W}\mat{T}$.

\section{Pareto-Guided Rank Selection}
\label{sec:theory}
The performance of a compressed LLM/VLM highly depends on the choice of matrix rank per layer. For a model with many layers, this selection becomes particularly challenging because each layer admits a wide range of candidate ranks, resulting in a large discrete optimization space. This section formulates model compression as a bi-objective optimization problem which minimizes both the network loss and the total number of model parameters with respect to the layer-wise compression ratios. To solve the proposed bi-objective problem, we establish the connection between the network loss and the layer-wise approximation error, and employ a rigorous analytical approach to derive a surrogate Pareto frontier. Finally, we demonstrate that our solution yields a surrogate-optimal allocation of compression ratios throughout the network.
 
\subsection{Bi-Objective Compression Formulation}
Consider a weight matrix $\mat{W}_l\in\mathbb{R}^{N_l\times M_l}$ in layer $l$ and its rank-$r_l$ compressed weight $\hat{\mat{W}}_l^{(r_l)}$. We define the set of matrix ranks
\begin{align}
\label{eq:decision_space}
\Gamma_l := 
\Bigl\{\,r_l\in\mathbb{Z}\;\Big|\;
&e_l=\frac{\bigl\|\hat{\mat{W}}_l^{(r_l)} - \mat{W}_l\bigr\|_F}
{\bigl\|\mat{W}_l\bigr\|_F} \le \varepsilon_l 
\Bigr\}, \nonumber\\
&\text{with}\quad 0 \leq \varepsilon_l \leq 1.
\end{align}

Let $\vect{r}=[r_1, r_2, \cdots, r_L]$ specify the ranks of all $L$ layers and $|\Delta {\cal L}(\vect{r})|$ denote the absolute change in network loss due to replacing every weight matrix $\mat{W}_l$ with its approximation 
$\hat{\mat{W}}_l^{(r_l)}$. 
The total number of parameters of the compressed LLM is
\begin{align}
\label{eq:network params count}
S(\vect{r})=\sum_{l=1}^L P_l(r_l),
\end{align}
where $P_l(r_l)$ is the total number of low-rank parameters in layer $l$. 

We aim to jointly minimize the number of parameters and the loss change during compression:
\begin{tcolorbox}[colback=green!5!white]
\begin{formulation}[Bi-objective Compression]
\label{formula: biobj_formula}
\begin{align}
\min_{\vect{r}\, \in \prod_{l=1}^L \Gamma_l} 
\;\;\big( S(\vect{r}), \;\; |\Delta {\cal L}(\vect{r})| \big).
\tag{B}
\end{align}
\end{formulation}
\end{tcolorbox}

To derive a surrogate Pareto frontier of formulation~\ref{formula: biobj_formula} we first link $|\Delta {\cal L}(\vect{r})|$ to layer-wise compression errors in the following section.

\subsection{Activation-Aware Compression and Network Loss}

We show that the activation-based compression objective Eq.~\eqref{eq:frob_wx} serves as an upper bound on the network loss change. Not only does this result highlight why activation-based compression is effective, but it also connects the network loss to the layer-wise compression, which we will utilize to solve our optimization problem.

\begin{theorem}[Loss Sensitivity to Activation-Based Compression]
\label{th:loss_vs_compress}
Let $ \vect{x}_{l+1}=\sigma\, \!\bigl(\mat{W}_l \vect{x}_l\bigr) $,  
with batch $ \mat{X}_l=\bigl[\vect{x}_l^{(1)}~\cdots~\vect{x}_l^{(B)}\bigr], $  
where $\sigma$ acts elementwise and $\sup_{t\in\mathbb R}|\sigma'(t)|\le c<\infty$, and $\hat{\mat{W}}_l=\mat{W}_l+\Delta\mat{W}_l$ denote the compressed weights.  
Then, for a differentiable scalar loss $\mathcal{L}$ we have:

\begin{align}
\bigl|\Delta\mathcal{L}\bigr|
~\le~ 
G \sum_{l=1}^{L}
\Biggl(\prod_{m=l+1}^{L} \mathcal{K}_m\Biggr)
c\,\|\Delta\mat{W}_l \mat{X}_l\|_F,\nonumber
\end{align}
where  $G := \bigl\|\nabla_{\mat{Y}}\mathcal{L}\bigr\|_{F}$, $\mat{Y}=\mat{X}_{L+1}$,
$\mathcal{K}_l:=\sup_{1\le i\le B}\|\mat{J}_l^{(i)}\|$, $\mat{J}_l^{(i)}
=\operatorname{diag}\!\bigl(\sigma'(\mat{W}_l\vect{x}_l^{(i)})\bigr)\,\mat{W}_l$. 
\end{theorem}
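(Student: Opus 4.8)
The plan is to prove the bound by a telescoping/hybrid argument over layers, controlling how a single layer's weight perturbation propagates forward through the network and then into the loss. Concretely, I would define for each $k\in\{0,1,\dots,L\}$ the "partially perturbed" forward pass in which layers $1,\dots,k$ use the compressed weights $\hat{\mat{W}}_1,\dots,\hat{\mat{W}}_k$ and layers $k+1,\dots,L$ use the original $\mat{W}_{k+1},\dots,\mat{W}_L$; write $\mat{Y}^{(k)}$ for the resulting output $\mat{X}_{L+1}$. Then $\Delta\mathcal L = \mathcal L(\mat{Y}^{(L)}) - \mathcal L(\mat{Y}^{(0)})$ telescopes as $\sum_{l=1}^L \bigl[\mathcal L(\mat{Y}^{(l)})-\mathcal L(\mat{Y}^{(l-1)})\bigr]$, so by the triangle inequality it suffices to bound each increment.

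For a fixed $l$, the two passes $\mat{Y}^{(l)}$ and $\mat{Y}^{(l-1)}$ agree up through layer $l-1$: they feed the same input $\mat{X}_l$ into layer $l$, which uses $\hat{\mat{W}}_l$ in one pass and $\mat{W}_l$ in the other, and thereafter both use the original weights $\mat{W}_{l+1},\dots,\mat{W}_L$. First I would bound the first-order effect on the loss by $\bigl|\mathcal L(\mat{Y}^{(l)})-\mathcal L(\mat{Y}^{(l-1)})\bigr| \le G\,\|\mat{Y}^{(l)}-\mat{Y}^{(l-1)}\|_F$, using $G=\|\nabla_{\mat Y}\mathcal L\|_F$ — this is really a Lipschitz-type statement; to make it rigorous with a fixed $G$ I would either assume $\mathcal L$ is $G$-Lipschitz in $\mat Y$ over the relevant region, or invoke the mean value / fundamental theorem of calculus along the segment joining $\mat{Y}^{(l-1)}$ and $\mat{Y}^{(l)}$ and absorb the sup of the gradient norm into $G$. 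Next I would propagate the perturbation at the output of layer $l$ forward: the difference of the two layer-$l$ outputs is $\sigma(\hat{\mat W}_l\vect x_l) - \sigma(\mat W_l\vect x_l)$, whose norm is at most $c\,\|\Delta\mat W_l\vect x_l\|$ per sample because $\sigma$ is elementwise with $|\sigma'|\le c$ (again mean value theorem, elementwise); aggregating over the batch gives $\le c\,\|\Delta\mat W_l\mat X_l\|_F$. Then each subsequent layer $m=l+1,\dots,L$ acts as $\vect x\mapsto\sigma(\mat W_m\vect x)$, and the map from its input perturbation to its output perturbation has operator norm bounded (via the mean value theorem along the segment, using $\mat J_m^{(i)}=\operatorname{diag}(\sigma'(\mat W_m\vect x_m^{(i)}))\mat W_m$ as the Jacobian) by $\mathcal K_m=\sup_i\|\mat J_m^{(i)}\|$. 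Chaining these contractions/expansions multiplicatively over $m=l+1,\dots,L$ yields $\|\mat{Y}^{(l)}-\mat{Y}^{(l-1)}\|_F \le \bigl(\prod_{m=l+1}^L\mathcal K_m\bigr)\,c\,\|\Delta\mat W_l\mat X_l\|_F$. Summing over $l$ and multiplying by $G$ gives exactly the claimed inequality.

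The main obstacle is making the two mean-value-type steps fully rigorous for matrix-valued, possibly nonlinear maps: the loss is evaluated on a batch of vectors, the intermediate activations are nonlinear in their inputs, and strictly speaking the Jacobian bound $\mathcal K_m$ controls the differential only pointwise, so one needs an integrated/segment version (e.g. $\|F(b)-F(a)\|\le\sup_{t\in[0,1]}\|DF(a+t(b-a))\|\,\|b-a\|$) and must check that the sup is controlled by $\sup_i\|\mat J_m^{(i)}\|$ — which is not literally true unless the intermediate points $a+t(b-a)$ are themselves among the activations $\vect x_m^{(i)}$ or the bound is interpreted as holding along the whole trajectory. I expect the paper handles this either by a slightly informal first-order expansion (treating $\Delta\mat W_l$ as small) or by implicitly assuming the Jacobian norm bounds hold uniformly along the interpolation paths; in the write-up I would state the needed regularity as part of the hypotheses and note that the constants $\mathcal K_m$, $G$, $c$ should be read as suprema over the relevant neighborhoods. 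A secondary, minor bookkeeping point is the batch aggregation: one must be consistent about whether $\|\cdot\|_F$ over the batch decomposes as the $\ell_2$-combination of per-sample norms and that the per-sample operator-norm bounds $\mathcal K_m$ then give the stated Frobenius-norm bound — this follows because $\|[\vect v^{(1)}\cdots\vect v^{(B)}]\|_F^2=\sum_i\|\vect v^{(i)}\|^2$ and $\|\mat J\vect v^{(i)}\|\le\|\mat J\|\,\|\vect v^{(i)}\|$.
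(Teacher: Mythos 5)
Your proposal is correct and lands on the same bound, but it organizes the estimate differently from the paper. The paper perturbs all layers simultaneously and tracks the accumulated activation error $\Delta\mat{X}_l$ through the first-order recursion $\|\Delta\mat{X}_{l+1}\|_F\le \mathcal{K}_l\|\Delta\mat{X}_l\|_F+c\|\Delta\mat{W}_l\mat{X}_l\|_F$, which it then unrolls; you instead telescope over hybrid networks in which one additional layer is compressed at a time, bounding each increment separately. The two decompositions produce the same sum term by term, and both rest on the identical ingredients (the $c\|\Delta\mat{W}_l\mat{X}_l\|_F$ bound on the injected error, the operator-norm factors $\mathcal{K}_m$ for downstream amplification via the per-column Frobenius decomposition, and Cauchy--Schwarz against $\nabla_{\mat Y}\mathcal L$). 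Two remarks. First, the rigor concerns you raise are exactly the points the paper finesses: its proof is explicitly a first-order expansion that ``drops second-order terms'' and writes $\Delta\mat{X}_{l+1}\approx\mat{G}_l+\mat{Q}_l$, so your mean-value/segment worries are not a gap relative to the paper's standard of proof --- the theorem should be read as a first-order bound in both treatments. Second, one small inaccuracy in your hybrid construction: with layers $1,\dots,l-1$ compressed in both passes, the common input to layer $l$ is the \emph{perturbed} activation $\hat{\mat X}_l$, not the original $\mat X_l$, and the downstream Jacobians are likewise evaluated along perturbed trajectories; so your exact telescoping yields the stated constants ($\|\Delta\mat W_l\mat X_l\|_F$ and $\mathcal K_m$) only after the same first-order identification the paper already makes. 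The trade-off is that your route makes the per-layer contribution and its downstream amplification more transparent, while the paper's recursion is slightly more compact to state and unroll.
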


\begin{proof}
The proof follows from a first-order perturbation analysis (Appendix~\ref{appx:loss_vs_compress_proof}).
\end{proof}

Theorem~\ref{th:loss_vs_compress} shows how compression-induced weight perturbations affect the overall loss. The local effect of layer $l$ is quantified by the activation distortion $\|\Delta \mat{W}_l \mat{X}_l\|_F$, scaled by the activation slope $c$ and the product of Jacobian norms of subsequent layers $\prod_{m=l+1}^L \mathcal{K}_m$, which captures how perturbations propagate through the network. Finally, the loss gradient $G$ converts this distortion into an upper bound on the loss change. Therefore, Theorem~\ref{th:loss_vs_compress} establishes a formal relation between network loss and the activation-aware compression objective $\|\Delta \mat{W}_l \mat{X}_l\|_F$. Although the bound still contains slack from norm inequalities and neglected higher-order terms, for a fixed pre-trained model and dataset the coefficients are constants, making it a tractable surrogate objective for compression design.

\subsection{Surrogate Pareto Frontier}
This subsection first proposes a scalarized surrogate, referred to as the rank allocation problem. Then we show that this problem is equivalent to a layer-wise error-tolerance allocation problem. Using this equivalence, we show that every uniform error-tolerance allocation across layers defines a point on the surrogate Pareto frontier of our bi-objective optimization. Therefore, our formulation guarantees optimal compression ratio allocations throughout the network in the surrogate sense.

We define 
\begin{align}
\alpha_l=\|\nabla_{\mat{Y}}  \mathcal{L}\|_F 
\Bigg( \prod_{m = l+1}^L \mathcal{K}_m \Bigg)c \|\mat{X}_l\|_F\|\mat{W}_l\|_F. \nonumber
\end{align} 
Let $b$ be a budget for the total number of parameters of the compressed model. 
Using Theorem~\ref{th:loss_vs_compress} and inequality $\|\Delta \mat{W}\mat{X}\|_F\le \|\Delta \mat{W}\|_F\|\mat{X}\|_F$ we have the scalarized surrogate of Formulation~\ref{formula: biobj_formula}.

\begin{tcolorbox}[colback=green!5!white]
\begin{formulation}[Rank Allocation]
\label{formula:rank_allocat}
    \begin{align}
        \min_{\vect{r}\,\in\prod_{l=1}^{L}\Gamma_l} &|\Delta \mathcal{L}(\vect{r})| \leq \sum_{l=1}^{L}\alpha_l e_l(r_l) \quad \nonumber\\
        &\text{s.t.}\quad \sum_{l=1}^{L}P_{l}(r_l)\leq b.
        \tag{P}
    \end{align}

\end{formulation}
\end{tcolorbox}
In the following, we define a mapping from rank (compression ratio) to compression error tolerance  and formally establish our error-equivalence formulation in Proposition~\ref{prop:rank_eps_equiv}.

\begin{definition}[$\varepsilon$--Parameter Mapping via SVD]
\label{def:eps_mapping}
For any matrix 
$\mat{W}\in\mathbb{R}^{N\times M}$ and tolerance $\varepsilon \in [0,1]$, 
there exists a unique minimal rank $r^\star(\varepsilon)$ such that the 
truncated SVD approximation $\hat{\mat{W}}^{(r^\star)}$ satisfies
\[
\frac{\|\mat{W}-\hat{\mat{W}}^{(r^\star)}\|_F}{\|\mat{W}\|_F} \;\le\; \varepsilon.
\]
Because $r^\star(\varepsilon)$ is minimal, it induces the 
\emph{minimal number of parameters} required to achieve error at most $\varepsilon$.  
We define the \emph{$\varepsilon$--parameter mapping}
\[
h:[0,1]\;\to\;\mathbb{Z}_{\geq 0}, 
\qquad h(\varepsilon) \;:=\; P\!\big(r^\star(\varepsilon)\big),
\]
where $P(r)=r(M+N)$ denotes the number of parameters of a rank-$r$ SVD factorization.
\end{definition}

\begin{proposition}[Rank--$\varepsilon$ Allocation Equivalence] 
\label{prop:rank_eps_equiv}
The rank-allocation problem \textnormal{(P)} and the $\varepsilon$-allocation problem
\textnormal{(E)} have the same optimal value.
\end{proposition}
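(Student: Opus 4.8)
The plan is to show that problems (P) and (E) are two descriptions of the same feasible set and objective, so their optima coincide. The key observation is the bijective correspondence between ranks and error tolerances supplied by Definition~\ref{def:eps_mapping}. First I would make the problem (E) explicit: it should read
\begin{align}
\min_{\vect{\varepsilon}\in[0,1]^L}\ \sum_{l=1}^L \alpha_l\,\varepsilon_l \quad \text{s.t.}\quad \sum_{l=1}^L h_l(\varepsilon_l)\le b, \nonumber
\end{align}
where $h_l$ is the $\varepsilon$--parameter mapping of $\mat{W}_l$ and each $\varepsilon_l$ is understood as the actual achieved relative error $e_l(r^\star_l(\varepsilon_l))$ at the minimal rank. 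Then I would argue the two directions of inclusion between feasible sets.

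For the direction (P)$\to$(E): given any feasible $\vect{r}$ for (P), each $r_l\in\Gamma_l$ satisfies $e_l(r_l)\le\varepsilon_l$ for some tolerance, and for truncated SVD the relative error $e_l(r_l)$ is itself a valid tolerance; set $\tilde\varepsilon_l:=e_l(r_l)$. By minimality in Definition~\ref{def:eps_mapping}, $r^\star_l(\tilde\varepsilon_l)\le r_l$, hence $h_l(\tilde\varepsilon_l)=P_l(r^\star_l(\tilde\varepsilon_l))\le P_l(r_l)$ because $P_l$ is nondecreasing in rank; summing gives $\sum_l h_l(\tilde\varepsilon_l)\le\sum_l P_l(r_l)\le b$, so $\vect{\tilde\varepsilon}$ is feasible for (E), and its objective $\sum_l\alpha_l\tilde\varepsilon_l=\sum_l\alpha_l e_l(r_l)$ equals the (P)-objective at $\vect{r}$. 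For the reverse direction (E)$\to$(P): given feasible $\vect{\varepsilon}$ for (E), set $r_l:=r^\star_l(\varepsilon_l)$; then $e_l(r_l)\le\varepsilon_l$ so $r_l\in\Gamma_l$, the budget constraint is exactly $\sum_l h_l(\varepsilon_l)\le b$, and the (P)-objective $\sum_l\alpha_l e_l(r_l)\le\sum_l\alpha_l\varepsilon_l$ is no larger than the (E)-objective. Combining the two inclusions, every feasible value of one problem is matched or improved by a feasible value of the other, so the infima agree.

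A subtlety worth spelling out is that the objective of (P) as written is the surrogate upper bound $\sum_l\alpha_l e_l(r_l)$, not $|\Delta\mathcal L|$ itself; the equivalence claim is about this surrogate objective, and I would state that explicitly to avoid confusion with Theorem~\ref{th:loss_vs_compress}. I would also note the tie-breaking point: in the (P)$\to$(E) step one could alternatively keep an arbitrary $\varepsilon_l\ge e_l(r_l)$, but choosing $\tilde\varepsilon_l=e_l(r_l)$ is what makes the objectives match exactly rather than merely bound one another; in the (E)$\to$(P) step, replacing $\varepsilon_l$ by the achieved error $e_l(r_l)$ can only decrease the objective, which is consistent with the claim of equal optimal value once one also runs the argument in the matching direction.

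The main obstacle I anticipate is not the inclusion argument itself but ensuring that the $\varepsilon$--parameter mapping is well behaved enough for the reductions to be airtight: specifically, that $r^\star_l(\varepsilon)$ is genuinely well defined (the set of ranks meeting the tolerance is nonempty for every $\varepsilon\in[0,1]$, which holds since the full-rank approximation has zero error) and that $P_l$ is monotone in rank so that ``minimal rank'' really does give ``minimal parameters.'' Both are immediate for SVD truncation with $P_l(r)=r(M_l+N_l)$, since the relative Frobenius error $e_l(r)=\bigl(\sum_{i>r}\sigma_i^2/\sum_i\sigma_i^2\bigr)^{1/2}$ is nonincreasing in $r$, so I would dispatch these as short remarks rather than a separate lemma.
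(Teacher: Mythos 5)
Your proposal is correct and follows essentially the same two-direction feasibility-transfer argument as the paper's own proof: setting $\tilde\varepsilon_l=e_l(r_l)$ for one direction and $r_l=r^\star_l(\varepsilon_l)$ for the other, with minimality of $h_l$ giving the budget inequality. Your added remarks on the surrogate objective, well-definedness of $r^\star_l$, and monotonicity of $P_l$ are sound clarifications the paper leaves implicit.
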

\begin{proof}
    See Appendix~\ref{appx:rank_eps_equiv_proof}.
\end{proof}

\begin{tcolorbox}[colback=green!5!white]
\begin{formulation}[$\varepsilon$-Allocation]
\label{formula:error_allocate}
    \begin{align}
    \min_{\,0\le \varepsilon_1,\dots,\varepsilon_L\le 1}
    &\ \sum_{l=1}^L \alpha_l\,\varepsilon_l \quad {\rm s.t.}\quad \sum_{l=1}^L h_l(\varepsilon_l)\le b.
    \tag{E}
    \end{align}
\end{formulation}
\end{tcolorbox}

Although rank allocation can be posed as a knapsack problem (Formulation~\ref{formula:rank_allocat}), the large number of layers and candidate ranks typically makes this approach computationally intractable for modern LLMs. Similar combinatorial formulations have been explored in other compression settings and often require approximate or heuristic search strategies~\citep{frantar2022spdy,malinovskii2024higgs, sieberling2025evopress}; in our setting, low-rank compression further induces a substantially larger and more fine-grained allocation space due to the broad range of admissible ranks in each layer. Formulation~\ref{formula:error_allocate} provides an alternative continuous surrogate that is more computationally efficient and convex, while also enabling analytical characterization of the resulting allocation strategy.


Rank selection can follow two approaches. A data-driven method requires the coefficients $\alpha$; however, the activations $\mat{X}$ and therefore the coefficients $\alpha$ are unknown in practice. While they can be approximately estimated from sampled data, doing so makes the resulting network architecture data-dependent, which may reduce robustness and complicate stable on-device deployment. In contrast, a data-agnostic method assumes equal weighting across layers, motivated by robust optimization (Appendix~\ref{appx:h_assumption}); while conservative, this strategy avoids data dependence, improves robustness to uncertainty, and ensures that changes in the data do not require altering the network architecture, but only updating the parameter values. We therefore adopt the latter method and show that assigning the same error tolerance ($\varepsilon$) to all layers yields heterogeneous ranks that are surrogate-optimal for the bi-objective Formulation~\ref{formula: biobj_formula}, as formally stated in Theorem~\ref{th:pareto_front}.

\begin{lemma}[Uniform $\varepsilon$ under homogeneous sensitivity and bounded profiles]
\label{th:uniform_eps}
Consider the $\varepsilon$-allocation problem \textnormal{(E)} for a homogeneous network (where $\alpha_l\equiv\alpha, \forall l$) to ensure robustness followed by Remark~\ref{rem:robust_alpha_minmax} (see Appendix~\ref{appx:h_assumption}) and denote by
\[
p
=\min_{0\le\varepsilon_1,\dots,\varepsilon_L\le1}
\sum_{l=1}^L \alpha\,\varepsilon_l 
\quad {\rm s.t.}\quad \sum_{l=1}^L h_l(\varepsilon_l)\le b.
\]
Assume each layer’s parameter function $h_l$ is bounded by common
nonincreasing convex envelopes
$\underline h,\overline h:[0,1]\to\mathbb{R}_{\ge0}$ (see Appendix~\ref{appx:svd_profile_env}) such that 
\[
\underline h(\varepsilon)\ \le\ h_l(\varepsilon)\ \le\ \overline h(\varepsilon),
\qquad \forall\, l,\ \varepsilon\in[0,1].
\]
Let the budget $b$ satisfy $L\,\overline h(1)\le b\le L\,\overline h(0)$.  Then the $\varepsilon$-allocation problem admits a uniform solution
$\varepsilon_1=\cdots=\varepsilon_L$ that  
(i) is optimal for the symmetric surrogate using $\overline h$, with the common value 
$\varepsilon^\star$ satisfying $L\,\overline h(\varepsilon^\star)=b$;
(ii) is minimax-optimal for the worst-case admissible profiles, attained at $h_l\equiv\overline h$, and
(iii) yields bounds
\[
\alpha L\,\varepsilon^{\ell}\ \le\ p \le\ \alpha L\,\varepsilon^{\mathrm{u}},
\]
where $\varepsilon^{\ell},\varepsilon^{\mathrm{u}}$ are defined by
$L\,\underline h(\varepsilon^{\ell})=b$,  $L\,\overline h(\varepsilon^{\mathrm{u}})=b$.

\end{lemma}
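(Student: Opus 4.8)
\textbf{Proof plan for Lemma~\ref{th:uniform_eps}.}

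The plan is to exploit the homogeneity $\alpha_l\equiv\alpha$ together with convexity of the envelopes to collapse the $L$-dimensional allocation to a single scalar, and then sandwich the true optimal value $p$ between the two envelope-based optima.

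First I would establish claim (i). Replace every $h_l$ by the common upper envelope $\overline h$, so the symmetric surrogate reads $\min\sum_l \alpha\varepsilon_l$ subject to $\sum_l \overline h(\varepsilon_l)\le b$. Since the objective is symmetric and linear and $\overline h$ is convex and nonincreasing, I would argue the minimizer is symmetric: given any feasible $(\varepsilon_1,\dots,\varepsilon_L)$, the averaged point $\bar\varepsilon=\tfrac1L\sum_l\varepsilon_l$ has the same objective value $\alpha L\bar\varepsilon$ and, by Jensen applied to the convex $\overline h$, satisfies $\sum_l\overline h(\bar\varepsilon)=L\overline h(\bar\varepsilon)\le\sum_l\overline h(\varepsilon_l)\le b$, hence is feasible. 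So it suffices to optimize over the uniform value $\varepsilon\in[0,1]$: minimize $\alpha L\varepsilon$ subject to $L\overline h(\varepsilon)\le b$. Because $\overline h$ is nonincreasing, the constraint is $\varepsilon\ge \overline h^{-1}(b/L)$ (using the generalized inverse), so the minimum is attained at the boundary $\varepsilon^\star$ with $L\overline h(\varepsilon^\star)=b$; the budget hypothesis $L\overline h(1)\le b\le L\overline h(0)$ guarantees $\varepsilon^\star\in[0,1]$ exists. This is claim (i).

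Next, claim (ii): minimax optimality. Consider the worst case over admissible profiles $\{(h_l)_l : \underline h\le h_l\le\overline h\}$ of the allocation optimum. For any fixed admissible profile, shrinking each $h_l$ only enlarges the feasible set and hence lowers the optimal value, so the worst (largest) optimal value over profiles is attained when each $h_l$ is as large as possible, i.e. $h_l\equiv\overline h$; by part (i) that worst-case value equals $\alpha L\varepsilon^\star$ and is achieved by the uniform allocation. I would phrase this as: the uniform $\varepsilon$ solves $\min_{\boldsymbol\varepsilon}\max_{(h_l)}\sum_l\alpha\varepsilon_l$ with the inner max pinned at $h_l\equiv\overline h$, matching the saddle value $\alpha L\varepsilon^\star$.

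Finally, claim (iii): the two-sided bound on the true $p$. For the upper bound, the uniform allocation $\varepsilon_l\equiv\varepsilon^{\mathrm u}$ with $L\overline h(\varepsilon^{\mathrm u})=b$ is feasible for (E) because $h_l(\varepsilon^{\mathrm u})\le\overline h(\varepsilon^{\mathrm u})=b/L$ for every $l$, so $p\le\sum_l\alpha\varepsilon^{\mathrm u}=\alpha L\varepsilon^{\mathrm u}$. For the lower bound, relax (E) by replacing each $h_l$ with $\underline h$: any feasible point of (E) satisfies $\sum_l\underline h(\varepsilon_l)\le\sum_l h_l(\varepsilon_l)\le b$, so the relaxed problem has optimal value $\le p$; but the relaxed problem, by the same Jensen/symmetrization argument as in (i) applied to the convex nonincreasing $\underline h$, has optimal value $\alpha L\varepsilon^{\ell}$ with $L\underline h(\varepsilon^{\ell})=b$. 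Hence $\alpha L\varepsilon^{\ell}\le p\le\alpha L\varepsilon^{\mathrm u}$.

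The main obstacle I anticipate is handling the envelopes and their inverses cleanly when $h_l$ (and possibly $\overline h,\underline h$) are only piecewise-constant/step-like — the SVD parameter counts $h(\varepsilon)$ are integer-valued, so strict convexity and classical inverses fail. I would address this by working with the convex, nonincreasing \emph{envelopes} $\underline h,\overline h$ as assumed (not the raw $h_l$), using generalized (left-continuous) inverses, and noting that the Jensen step only needs convexity, not strictness; the boundary-attainment of the minimum only needs monotonicity. Care is also needed to state that $\varepsilon^\star,\varepsilon^{\ell},\varepsilon^{\mathrm u}$ are well-defined in $[0,1]$, which follows from the budget sandwich $L\overline h(1)\le b\le L\overline h(0)$ together with $\underline h\le\overline h$ and continuity/monotonicity of the envelopes.
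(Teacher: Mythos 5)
Your proposal is correct, and for parts (ii) and (iii) it follows essentially the same route as the paper: the worst-case profile is pinned at $\overline h$ because shrinking the $h_l$ only enlarges the feasible set, the upper bound comes from feasibility of the uniform point $\varepsilon^{\mathrm u}$, and the lower bound comes from the relaxation $h_l\mapsto\underline h$. The genuine difference is in part (i) (and in the uniform-optimality step of the relaxed problem in (iii)): the paper invokes Slater's condition and the KKT system for the convex surrogate, arguing from stationarity $1+\mu g_l=0$ with $g_l\in\partial\overline h(\varepsilon_l)$ and monotonicity of the subdifferential that all interior coordinates coincide, with a somewhat informal dismissal of mixed boundary/interior configurations. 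You instead use a Jensen symmetrization: for any feasible $\boldsymbol\varepsilon$ the averaged point $\bar\varepsilon=\tfrac1L\sum_l\varepsilon_l$ has identical (linear, symmetric) objective value and remains feasible because $L\,\overline h(\bar\varepsilon)\le\sum_l\overline h(\varepsilon_l)\le b$, which collapses the problem to one scalar variable; monotonicity of $\overline h$ then places the optimum at $L\,\overline h(\varepsilon^\star)=b$. Your argument is more elementary and arguably more robust: it needs only convexity and monotonicity of the envelopes (no Slater condition, no strict decrease, no case analysis at the box boundary), and it handles the nonsmooth, piecewise-flat envelopes via generalized inverses exactly where you flag the issue. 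What it does not give you (and the KKT route attempts to) is a characterization that \emph{every} optimizer is uniform; but the lemma only claims that the problem \emph{admits} a uniform solution, so your weaker conclusion is exactly what is required.
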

\begin{proof}
See Appendix~\ref{appx:uniform_eps_proof}.
\end{proof}

\begin{theorem}[Uniform $\varepsilon$ yields the surrogate Pareto front of (B)]
\label{th:pareto_front}
Under the homogeneous sensitivity and bounded-profile condition
(Lemma~\ref{th:uniform_eps}), every uniform tolerance 
$\varepsilon \in [0,1]$ corresponds to a point on the surrogate Pareto frontier of \textnormal{(B)}, which is the set of nondominated pairs
\[
\Bigl(\sum_{l=1}^L \alpha_l e_l(r_l),\ \sum_{l=1}^L P_l(r_l)\Bigr).
\]
\end{theorem}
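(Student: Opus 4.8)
The plan is to prove Theorem~\ref{th:pareto_front} by leveraging the chain of reductions already established: the scalarized surrogate (P) upper bounds $|\Delta\mathcal{L}(\vect{r})|$ via Theorem~\ref{th:loss_vs_compress}, Proposition~\ref{prop:rank_eps_equiv} gives (P)$\equiv$(E), and Lemma~\ref{th:uniform_eps} shows that for the homogeneous, worst-case-envelope problem a uniform tolerance is optimal. What remains is to translate ``optimal for the scalarized $\varepsilon$-problem at budget $b$'' into ``nondominated for the bi-objective pair $\bigl(\sum_l \alpha_l e_l(r_l),\ \sum_l P_l(r_l)\bigr)$'' as $b$ ranges over the admissible interval. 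I would proceed in three steps.

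\textbf{Step 1: Parametrize the frontier by the budget.} Fix the homogeneous sensitivity setting $\alpha_l\equiv\alpha$ and the common envelope $\overline h$ as in Lemma~\ref{th:uniform_eps}. For a budget $b\in[L\,\overline h(1),\,L\,\overline h(0)]$, the uniform value $\varepsilon^\star(b)$ is the unique solution of $L\,\overline h(\varepsilon^\star)=b$; since $\overline h$ is nonincreasing and convex (hence continuous and strictly decreasing on the relevant range), $b\mapsto\varepsilon^\star(b)$ is a well-defined, continuous, strictly decreasing bijection onto $[0,1]$. Setting $r_l^\star(b)=r^\star(\varepsilon^\star(b))$ via Definition~\ref{def:eps_mapping} gives the uniform-tolerance rank vector $\vect{r}^\star(b)$, and by Lemma~\ref{th:uniform_eps}(i) this attains the minimum of $\sum_l\alpha\,\varepsilon_l$ subject to $\sum_l h_l(\varepsilon_l)\le b$ when $h_l\equiv\overline h$. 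I will denote the two objective values along this curve by $E(b):=\sum_l\alpha\,\varepsilon^\star(b)=\alpha L\,\varepsilon^\star(b)$ and $S(b):=\sum_l P_l(r_l^\star(b))=L\,\overline h(\varepsilon^\star(b))=b$.

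\textbf{Step 2: Nondomination argument.} Suppose, for contradiction, that the pair $(E(b),S(b))$ is dominated by some feasible $\vect{r}'$, i.e.\ $\sum_l\alpha_l e_l(r_l')\le E(b)$ and $\sum_l P_l(r_l')\le S(b)=b$ with at least one inequality strict. In the homogeneous surrogate ($\alpha_l\equiv\alpha$), the first condition reads $\sum_l\alpha\,e_l(r_l')\le\alpha L\,\varepsilon^\star(b)$. Translating ranks back to the tolerances they realize through the $\varepsilon$-parameter mapping (each rank $r_l'$ corresponds to an achieved relative error $e_l(r_l')=:\varepsilon_l'$ with $h_l(\varepsilon_l')\le P_l(r_l')$ by minimality in Definition~\ref{def:eps_mapping}), the pair $(\varepsilon_1',\dots,\varepsilon_L')$ is feasible for (E) at budget $b$ and achieves objective $\le\alpha L\,\varepsilon^\star(b)=p$. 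If the parameter inequality is strict, monotonicity of $\overline h$ lets me lower some $\varepsilon_l'$ strictly while staying within budget, strictly decreasing the objective below $p$; if instead the error inequality is strict, we already have objective $<p$. Either way this contradicts the optimality asserted in Lemma~\ref{th:uniform_eps}(i)(ii) (minimax-optimality over admissible profiles, attained at $h_l\equiv\overline h$, rules out a feasible point with smaller scalarized cost). Hence $(E(b),S(b))$ is nondominated, i.e.\ it lies on the surrogate Pareto frontier.

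\textbf{Step 3: Sweep $\varepsilon\in[0,1]$.} By the bijection from Step~1, as $b$ ranges over $[L\,\overline h(1),L\,\overline h(0)]$, the value $\varepsilon^\star(b)$ sweeps all of $[0,1]$; equivalently, for every $\varepsilon\in[0,1]$ the uniform allocation $\varepsilon_1=\cdots=\varepsilon_L=\varepsilon$ is the uniform solution for budget $b=L\,\overline h(\varepsilon)$, which by Steps~1--2 produces a nondominated pair. This is exactly the claim that every uniform tolerance corresponds to a point on the surrogate Pareto frontier of (B), and the induced ranks $r_l^\star=r^\star(\varepsilon)$ are heterogeneous because the per-layer profiles $h_l$ (equivalently the singular-value spectra of $\mat{W}_l\mat{T}_l$) differ across layers even though the tolerance is shared.

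\textbf{Anticipated main obstacle.} The delicate point is the rank--tolerance bookkeeping in Step~2: moving between the discrete rank variables $r_l$ and the continuous tolerances $\varepsilon_l$ is only clean in the direction guaranteed by Proposition~\ref{prop:rank_eps_equiv} and Definition~\ref{def:eps_mapping} (minimal rank for a given $\varepsilon$), whereas a dominating $\vect{r}'$ gives an \emph{achieved} error that need not be of this minimal form, and the $h_l$ are step functions, so ``strictly lowering $\varepsilon_l'$ within budget'' must be justified via the common convex envelope $\overline h$ rather than the raw $h_l$. I would handle this by working entirely with $\overline h$ in the worst-case surrogate (as Lemma~\ref{th:uniform_eps} instructs) and invoking its strict monotonicity/continuity to realize the perturbation, then noting the envelope bounds $\underline h\le h_l\le\overline h$ only affect the position of the frontier quantitatively (the $\varepsilon^\ell,\varepsilon^{\mathrm u}$ bracket of Lemma~\ref{th:uniform_eps}(iii)) and not the nondomination conclusion. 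A secondary subtlety is degenerate flats of $\overline h$ (where strict decrease of $b\mapsto\varepsilon^\star(b)$ could fail); I would dispatch this with the strict convexity assumption on the envelope noted in Appendix~\ref{appx:svd_profile_env}, or else restrict to the closure of nondominated points, which does not change the stated set.
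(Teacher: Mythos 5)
Your proposal takes essentially the same route as the paper's proof: chain Theorem~\ref{th:loss_vs_compress} $\to$ Proposition~\ref{prop:rank_eps_equiv} $\to$ Lemma~\ref{th:uniform_eps}, then argue nondomination of the objective pair. The only real structural difference is the parametrization: the paper fixes the uniform tolerance $\varepsilon$, sets $H(\varepsilon):=\sum_l P_l(r_l(\varepsilon))$ using the \emph{actual} per-layer profiles $h_l$, and then chooses $b:=H(\varepsilon)$, whereas you start from $b$ and recover $\varepsilon^\star(b)$ through the envelope $\overline h$. Both parametrizations sweep out the same set of points, so this is a matter of taste.

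However, two places in your write-up are imprecise in a way the paper's choice of parametrization avoids. First, you claim $S(b)=\sum_l P_l(r_l^\star(b))=L\,\overline h(\varepsilon^\star(b))=b$. The middle equality holds only in the worst-case profile $h_l\equiv\overline h$; in general $P_l(r_l^\star(b))=h_l(\varepsilon^\star(b))\le\overline h(\varepsilon^\star(b))$, so $S(b)\le b$ with possible slack. This does not sink the argument (any $\vect r'$ with $\sum_l P_l(r_l')\le S(b)$ is still within budget $b$), but it should be stated as an inequality; the paper sidesteps it entirely by defining $b$ from the achieved parameter count rather than conversely. Second, in Step~2 you derive a contradiction with ``objective $<p$,'' where $p$ is the true optimum of (E), yet your comparison value $E(b)=\alpha L\,\varepsilon^\star(b)$ equals $\alpha L\,\varepsilon^{\mathrm u}$ in Lemma~\ref{th:uniform_eps}(iii), which is only an \emph{upper} bound on $p$. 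Showing objective $<E(b)$ does not by itself contradict optimality of $p$. The clean way to close this (and, implicitly, what the paper does by appealing to Lemma~\ref{th:uniform_eps} in the ``surrogate'' sense) is to conduct the entire nondomination argument under the worst-case surrogate $h_l\equiv\overline h$, where $p=E(b)$ holds exactly and the contradiction goes through. You gesture at this in your ``anticipated obstacle'' paragraph but do not carry it through consistently in Step~2. With that adjustment made explicit, your argument matches the paper's; the paper also adds the short observation that distinct tolerances inducing the same rank vector map to the same frontier point, which you omit but which is only bookkeeping.
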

\begin{proof}
    See Appendix~\ref{appx:uniform_eps_pareto_proof}.
\end{proof}

Theorem~\ref{th:pareto_front} shows that every uniform layer-wise error tolerance ($\varepsilon$) for SVD compression yields a near Pareto-optimal solution for Formulation~\ref{formula: biobj_formula}. Since layer spectra differ, the same $\varepsilon$ naturally provides {\bf adaptive rank selections} across layers. Whereas previous work often applied uniform compression ratios across layers \citep{svdllm_wang2024, yuan2023asvd}, we suggest instead applying a uniform error allocation directly tied to network loss. This approach leaves a single hyperparameter $\varepsilon$ to control the loss-compression trade-off. When finer control is needed, Theorem~\ref{th:uniform_eps} applies to clustered allocations, where layers are grouped into classes (e.g., MLP and attention) and a common $\varepsilon$ is assigned within each class.

\section{From Theory to Algorithms}
\paragraph{PGSVD.} Based on the theoretical results in Section~\ref{sec:theory}, we propose the PGSVD to improve the LLM/VLM performance during compression via activation-aware SVD. PGSVD first determines the ranks and initializes the factors by directly factorizing $\mat{W}$. It then further optimizes $\mat{A}$ and $\mat{B}$ to minimize Eq.~\eqref{eq:frob_wx}. Following Theorem~\ref{th:pareto_front}, PGSVD assigns a uniform compression tolerance $\varepsilon$ to all layers, which naturally leads to heterogeneous compression ratios. After determining the optimal layer-wise ranks through this allocation, PGSVD minimizes Eq.~\eqref{eq:frob_wx} with respect to the low-rank parameters using activations. 

\paragraph{Efficient ALS Implementation.} We further introduce a new alternating least squares (ALS) solver to improve the efficiency of the compression algorithm. By expanding Eq.~\eqref{eq:frob_wx} in trace form and differentiating with respect to $\mat{A}$ and $\mat{B}$ (see Appendix~\ref{appx:als_derivation}), we obtain the following ALS updates:
\begin{align}
\label{eq:als_updates}
    \mat{A}&=\mat{W}\mat{M}\mat{B}^{\top} \left( \mat{B}\mat{M}\mat{B}^{\top}\right)^{\dagger},\nonumber\\
    \mat{B} &= \left(\mat{A}^{\top}\mat{A} \right)^{\dagger}\mat{A}^{\top}\mat{W},
\end{align}
where $\mat{M}=\mat{X}\mat{X}^{T}$ is the empirical covariance matrix and $\mat{A}$ and $\mat{B}$ are initialized by the rank-$r$ approximation of $\mat{W}$ via SVD. After initialization, $\mat{A}$ and $\mat{B}$ are updated for $\tau$ iterations. Because the pseudo inverses involve only $r \times r$ matrices, these steps are computationally efficient. The algorithm~\ref{alg:svd_prals} summarizes the PGSVD steps.

\begin{algorithm}[t]
\caption{PGSVD Algorithm}
\label{alg:svd_prals}
\begin{algorithmic}
\REQUIRE{$\{\mat{W}_l\}_{l=1}^{L}$, \{$\mat{M}_l\}_{l=1}^L$, $\varepsilon$, $\tau$}
\FOR{$l=1$ \textbf{to} $L$}
\STATE $r \leftarrow  \min\{\,r\in \mathbb{Z} \mid e_l(r) \le \varepsilon\,\}$
\STATE $\mat{U}_{r}\Sigma_{r}\mat{V}_{r}^\top\leftarrow \text{SVD}(\mat{W}_l,r)$
\STATE Initialize $\mat{A}_l=\mat{U}_r\Sigma_r^{1/2}$ and  $\mat{B}_l=\Sigma_r^{1/2}\mat{V}_r^\top$
\FOR{$t=1$ \textbf{to} $\tau$} 
\STATE Update $\mat{A}_l$ and $\mat{B}_l$ via ALS [ Eq.~\eqref{eq:als_updates}]
\ENDFOR
\ENDFOR
\STATE \textbf{Return} $\{\mat{A}_l$, $\mat{B}_l\}_{l=1}^L$
\end{algorithmic}
\end{algorithm}

\paragraph{PGSVD for Multimodal Models.}
VLM with modality-specific towers (e.g., a ViT image encoder and a Transformer text encoder) exhibit different weight and gradient distributions. This induces inter-modality imbalance, making a single global compression setting systematically biased. To address this, we assign separate error tolerances $\varepsilon_v$ and $\varepsilon_t$ for the vision and text modalities, respectively. Concretely, PGSVD applies a uniform tolerance within each modality to yield heterogeneous, layer-wise ranks tailored to that modality's spectrum, then optimizes the factors via activation least squares. This two-hyperparameter design preserves a small search space while respecting modality asymmetry, producing better accuracy-efficiency trade-offs than a single global $\varepsilon$.
\begin{table*}[t]
\scriptsize
\centering
\setlength{\tabcolsep}{4pt}
\renewcommand{\arraystretch}{1.15}
\caption{Perplexity (PPL) and zero-shot accuracy (\%) for reasoning tasks at Base, 20\% and 40\% compression.}
\label{tab:language_results}
\begin{tabular}{lllc|ccccc|c}
\toprule
\textbf{Model} & \textbf{Compression} & \textbf{Method} 
& \textbf{PPL (↓)} 
& \textbf{ARC-E} & \textbf{CSQA} & \textbf{Lambada} 
& \textbf{PIQA} & \textbf{Wino} & \textbf{AvgAcc (↑)} \\
\midrule
{LLaMA-2-7B} & {} & {\color{gray}Base} & {\color{gray}5.11} & {\color{gray}76.30} & {\color{gray}33.01} & {\color{gray}68.25} & {\color{gray}78.07} & {\color{gray}69.06} & {\color{gray}64.94} \\
\cmidrule(lr){2-10}
           & 20\% & SVD-LLM   & 7.70  & 68.56 & 19.82 & 46.61 & 70.35 & 64.33 & 53.93 \\
           &      & SVD-ALS   & 7.72  & 68.73 & 20.80 & 47.18 & 70.57 & \textbf{64.80} & 54.42 \\
           &      & PGSVD & \textbf{7.38}  & \textbf{70.75} & \textbf{20.80} & \textbf{53.02} & \textbf{71.27} & 64.56 & \textbf{56.08} \\
\cmidrule(lr){2-10}
           & 40\% & SVD-LLM   & 14.95 & 50.21 & 19.49 & 16.17 & 60.94 & 58.64 & 41.09 \\
           &      & SVD-ALS   & 15.03 & 50.88 & 19.16 & 16.59 & 61.21 & 58.88 & 41.35 \\
           &      & PGSVD & \textbf{13.46} & \textbf{54.76} & \textbf{19.25} & \textbf{21.66} & \textbf{62.89} & \textbf{59.75} & \textbf{43.66} \\
\midrule
{LLaMA-2-13B} & {} & {\color{gray}Base} & {\color{gray}4.57} & {\color{gray}79.46} & {\color{gray}46.60} & {\color{gray}70.33} & {\color{gray}79.11} & {\color{gray}72.30} & {\color{gray}69.56} \\
\cmidrule(lr){2-10}
            & 20\% & SVD-LLM   & 6.17 & 71.00 & \textbf{25.23} & 57.54 & 72.91 & 67.17 & 58.77 \\
            &      & SVD-ALS   & 6.19 & 66.12 & 20.31 & 45.97 & 69.26 & 63.46 & 53.02 \\
            &      & PGSVD & \textbf{5.96} & \textbf{73.36} & 25.14 & \textbf{60.78} & \textbf{73.23} & \textbf{69.38} & \textbf{60.38} \\
\cmidrule(lr){2-10}
            & 40\% & SVD-LLM   & 10.00 & 56.61 & 19.49 & 26.47 & 63.22 & 60.85 & 45.33 \\
            &      & SVD-ALS   & 10.06 & 57.87 & \textbf{22.11} & 26.86 & 63.60 & 62.12 & 46.51 \\
            &      & PGSVD & \textbf{9.55} & \textbf{59.34} & 19.98 & \textbf{31.71} & \textbf{64.15} & \textbf{62.67} & \textbf{47.57} \\
\midrule
{Mistral-7B}  & {} & {\color{gray}Base} & {\color{gray}4.92} & {\color{gray}80.85} & {\color{gray}56.27} & {\color{gray}69.49} & {\color{gray}80.63} & {\color{gray}74.19} & {\color{gray}72.29} \\
\cmidrule(lr){2-10}
            & 20\% & SVD-LLM   & 7.06 & 69.87 & \textbf{22.03} & 50.32 & 71.60 & 65.19 & 55.80 \\
            &      & SVD-ALS   & 7.10 & 70.88 & 21.95 & 50.38 & 71.65 & 65.82 & 56.14 \\
            &      & PGSVD & \textbf{6.71} & \textbf{72.31} & 21.05 & \textbf{52.80} & \textbf{73.07} & \textbf{66.46} & \textbf{57.14} \\
\cmidrule(lr){2-10}
            & 40\% & SVD-LLM   & 16.30 & 46.30 & 19.41 & 15.93 & 59.25 & 57.14 & 39.61 \\
            &      & SVD-ALS   & 15.69 & 47.18 & 19.49 & 16.13 & 58.76 & 57.62 & 39.84 \\
            &      & PGSVD & \textbf{14.43} & \textbf{49.24} & \textbf{20.23} & \textbf{16.48} & \textbf{60.45} & \textbf{58.48} & \textbf{40.98} \\
\bottomrule
\end{tabular}

\end{table*}

\section{Experiments}
We evaluated the performance of our proposed method on both LLM and VLM benchmarks. We also compared our method with previous activation-based low-rank compression and pruning baselines. Since our method is most relevant to SVD-LLM~\citep{svdllm_wang2024}, we first focus on comparison with SVD-LLM in Section~\ref {subsec:llm}.  Furthermore, to isolate the effect of heterogeneous rank selection, we implemented a variant of SVD-LLM by replacing the rank selection step in the PGSVD algorithm with a set of prefix ranks determined by a uniform compression ratio across all layers. We refer to this method as SVD-ALS.

We implemented our approach using the Hugging Face Transformers library and compressed all linear layers in the self-attention modules and all linear projections in the MLP blocks. To ensure numerical stability, we used full precision for model inferences when computing covariance matrices and we used double precision during compression. In all experiments, 10 iterations were used for ALS in both SVD-ALS and PGSVD.

\subsection{LLM Compression Results}
\label{subsec:llm}
\paragraph{Experimental Setup.} We first evaluate PGSVD against the most relevant low-rank baselines, SVD-LLM and SVD-ALS, to study the effect of Pareto-guided heterogeneous rank allocation on several LLMs, including LLaMA~\citep{touvron2023llama2, llama3_2024}, and Mistral~\citep{mistral2023mistral7b} models. Because PGSVD extends activation-based low-rank compression, our comparison focuses on methods that share the same compression framework and objective, ensuring a consistent and fair evaluation.
We report both perplexity on the WikiText-2 dataset~\citep{merity2016wikitext2} and zero-shot accuracy on a range of downstream reasoning tasks (see Appendix~\ref{app:reas_tasks}).


\begin{figure}[t]
  \centering
\includegraphics[width=\linewidth]{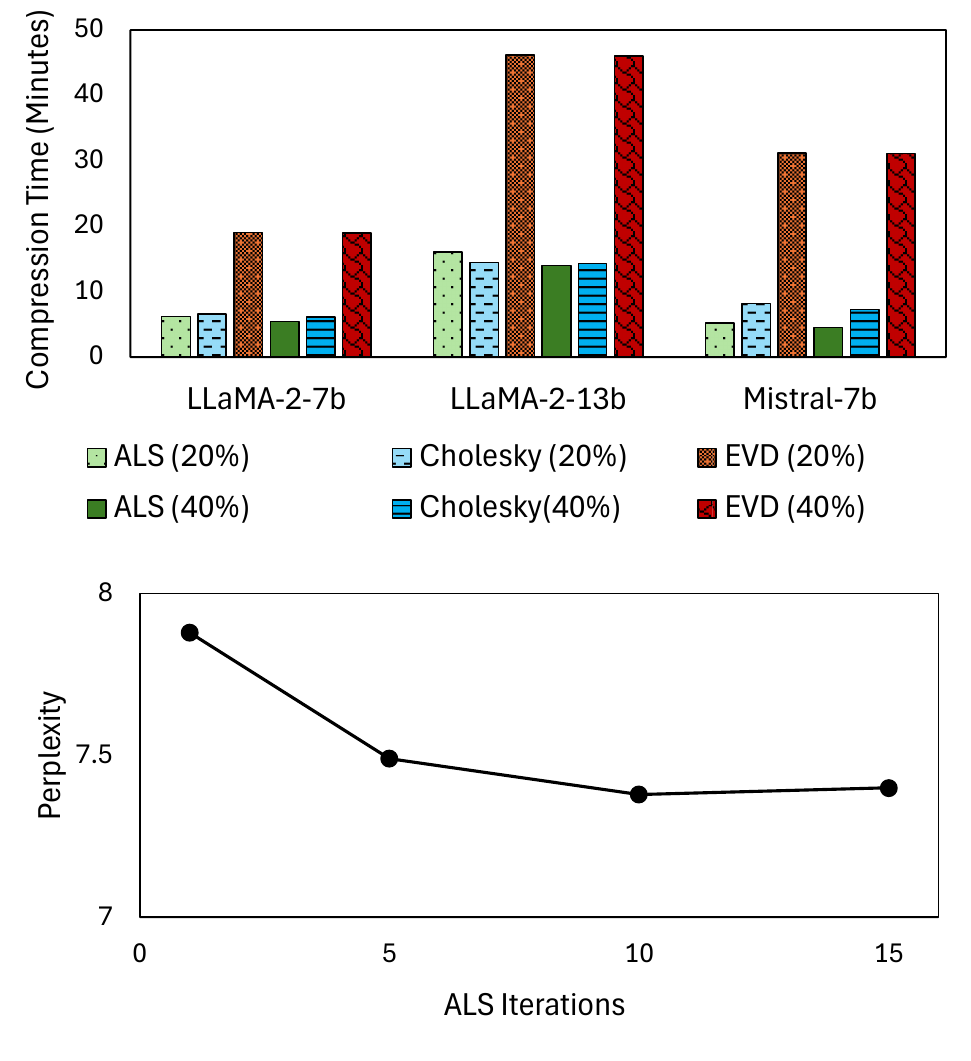}
    \setlength{\abovecaptionskip}{0pt} 
    \caption{Compression times of different solvers for different models (top) and perplexity versus the number of ALS iterations for LLaMA-2-7B (bottom).}
  \label{fig:als-ablation}
  \vspace{-15pt}
\end{figure}

\begin{figure*}[t] 
  \centering
  \begin{minipage}{0.49\linewidth}
    \centering
    \includegraphics[width=\linewidth]{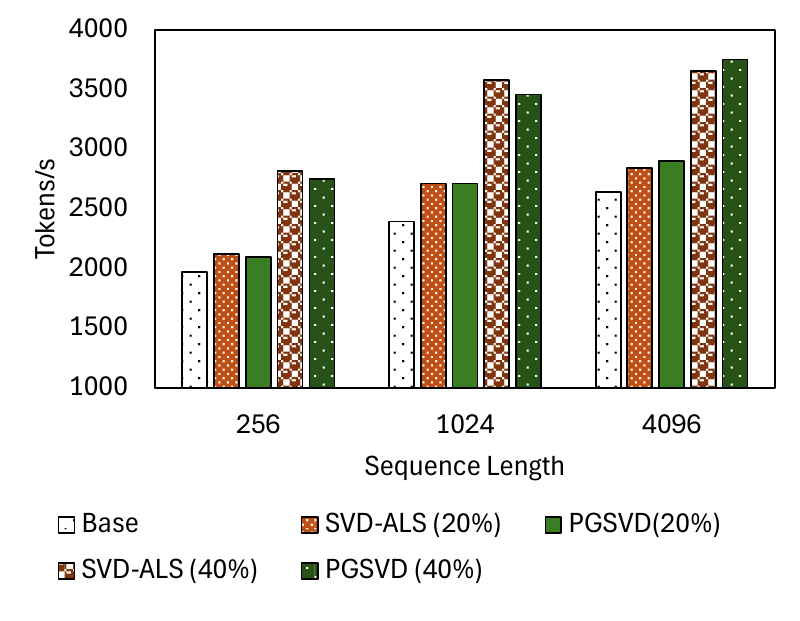}
  \end{minipage}
  \begin{minipage}{0.49\linewidth}
    \centering
    \includegraphics[width=\linewidth]{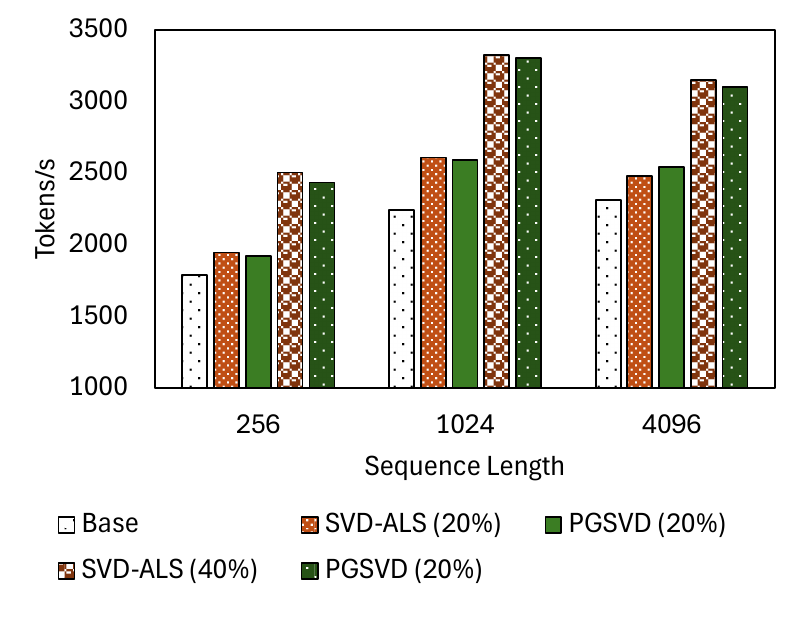}
  \end{minipage}
  \captionsetup{aboveskip=0pt}
  \caption{Inference throughput of  LLaMA-2-7b (left) and Mistral 7b (right) for 20\% and 40\% compression using PGSVD and SVD-ALS compared to the base model.}
  \label{fig:inference_latency}
  \vspace{-10pt}
\end{figure*}

\paragraph{Comparison with Activation-Aware Compression Methods} Table~\ref{tab:language_results} shows that Pareto-guided rank selection substantially improves model performance in terms of both perplexity and accuracy on reasoning tasks, with gains of up to 30\% and an average improvement of 14\%. Whereas SVD-ALS enforces a uniform compression ratio across all layers, PGSVD allocates Pareto-guided compression ratios, leading to a more balanced and effective utilization of model capacity. In addition, PGSVD outperforms SVD-LLM, achieving gains of up to 33\% on reasoning tasks and more than 6\% on average. Table~\ref{tab:llama_3_8B_results} further shows the superiority of PGSVD over SVD-LLM on the recent LLaMA-3.1-8B model. The ALS solver eliminates numerical failures common in Cholesky decomposition of SVD-LLM and yields faster compression than eigenvalue decomposition (EVD)~\citep{svdllmv2_wang2025}, as shown in Fig.~\ref{fig:als-ablation}. We further studied the effect of the number of ALS iterations on the quality of compression in PGSVD. The results for LLaMA-2 7B are shown in Fig.~\ref{fig:als-ablation}. We observed that improvements in perplexity plateau after approximately five to ten iterations. Notably, even with a single ALS iteration, the perplexity drops to an acceptable and competitive range.

\paragraph{Inference Throughput.} PGSVD improves the inference throughput of the network compared to the uncompressed (base) models (Fig.~\ref{fig:inference_latency}) across different compression ratios and sequence lengths for both LLaMA-2 and Mistral models on a H100 GPU. 
Since PGSVD assigns heterogeneous compression ratios, we also compared its inference throughput with SVD-ALS, which assigns homogeneous compression ratios. With a Python implementation, PGSVD achieves an inference speedup comparable to that of SVD-ALS, without using low-level CUDA kernel optimization or specialized optimization libraries.

\begin{table*}[t]
\scriptsize
\centering
\setlength{\tabcolsep}{5pt}
\renewcommand{\arraystretch}{1.15}
\caption{Top-1 and Top-5 accuracies (Top-1/Top-5) for zero-shot compression of CLIP across six datasets.}
\label{tab:clip_results}
\begin{tabular}{llccccccc}
\toprule
Compression & Method & Caltech101 & Food101 & OxfordPets & StanfordCars & Eurosat & DTD & Average \\
\midrule
 & {\color{gray!100} Base} & {\color{gray!100} 86.39/99.02} & {\color{gray!100} 88.51/98.61} & {\color{gray!100} 89.94/97.52} & {\color{gray!100} 64.22/93.86} & {\color{gray!100} 39.00/86.03} & {\color{gray!100} 44.25/74.95} & {\color{gray!100} 68.72/91.67} \\
\midrule
20\% & SVD        & 0.63/2.97   & 1.20/5.07   & 3.76/14.45  & 0.57/2.84   & 11.12/61.40 & 1.27/10.59 & 3.09/16.22 \\
     & SVD-ALS    & 69.22/92.13 & 75.03/95.05 & 36.82/67.02 & 51.73/88.21 & 24.05/66.86 & 16.32/35.00 & 45.53/74.05 \\
     & PGSVD  & \textbf{86.61/99.20} & \textbf{84.57/97.78} & \textbf{84.06/97.17} & \textbf{56.63/90.82} & \textbf{37.24/72.31} & \textbf{26.49/46.43} & \textbf{62.60/83.95} \\
\midrule
40\% & SVD              & 0.63/3.29  & 0.90/5.00  & 2.92/13.30 & 0.67/2.38  & 11.10/52.56 & 3.46/10.79 & 3.28/14.55 \\
     & SVD-ALS          & 73.35/94.53 & 69.81/92.53 & 7.52/26.62 & 31.94/71.88 & 20.69/69.37 & 19.15/40.69 & 37.08/65.94 \\
     & PGSVD & \textbf{76.95/95.67} & \textbf{72.48/93.33} & \textbf{55.68/83.02} & \textbf{39.77/78.70} & \textbf{38.95/69.57} & \textbf{21.49/36.80} & \textbf{50.89/76.18} \\
\bottomrule
\end{tabular}

\end{table*}

\subsection{VLM Compression Results}
\label{subsec:vlm}
\paragraph{Experimental Setup.} We further evaluate the performance of PGSVD for zero-shot compression of the CLIP model~\citep{radford2021clip} on several standard benchmarks, including Caltech101~\citep{fei2004learning}, Food101~\citep{bossard2014food101}, OxfordPets~\citep{parkhi2012catsdogs}, StanfordCars~\citep{krause2013stanfordcars}, EuroSAT~\citep{helber2019eurosat}, and DTD~\citep{cimpoi2014dtd}. In this experiment, we include traditional SVD, activation-aware SVD (SVD-ALS), and our Pareto-guided variant (PGSVD) to examine both the effect of activation-based compression and the benefit of rank allocation. We use SVD rather than SVD-LLM as a baseline because activation-based compression has not previously been applied to vision-language models, and SVD provides a direct reference for isolating the improvement introduced by activation-based compression.

\paragraph{Background.} The successful compression of VLMs has remained challenging due to their architectural complexity and cross-modality disparities~\citep{ye2023mplugowl,li2023blip2,shi2023upop}. Recently, ECoFLaP~\citep{yang2024ecoflap} achieved 40–60\% unstructured sparsity in VLMs with minimal accuracy loss. However, because this sparsity is both relatively low and unstructured, it does not translate into proportional real-time memory savings on standard platforms while low-rank compression can achieve actual memory reduction.

\paragraph{Performance Evaluation.} Table~\ref{tab:clip_results} lists the Top-1 and Top-5 accuracies of the CLIP model compressed with SVD, SVD-ALS, and PGSVD, compared with the uncompressed (base) model. Naïve SVD reduces accuracies in almost all benchmarks to near zero. SVD-ALS, significantly improves the accuracies since activation is considered in compression. PGSVD achieves the best accuracy across all datasets, closing the gap with the base model. In most datasets, we observe greater sensitivity to the vision model than to the language model. Therefore, different tolerances are assigned to different models, but within each model all layers share a uniform error tolerance, leading to heterogeneous compression ratios across layers. By adjusting only two hyperparameters (one for each modality), PGSVD enables an effective allocation of compression ratios for all layers in the network.

\subsection{Comparison with Other Baseline Methods}

\begin{table}[t]
  \scriptsize
  \centering
  \setlength{\tabcolsep}{6pt}
  \renewcommand{\arraystretch}{1.15}
    \caption{Comparison of PGSVD with other baselines on the WikiText-2 dataset.}
  \label{tab:pruning_comparison}
  \begin{tabular}{llccc}
    \toprule
    \textbf{Model} & \textbf{Methods} & \multicolumn{3}{c}{\textbf{Compression}} \\
    \cmidrule(lr){3-5}
    & & \textbf{10\%} & \textbf{30\%} & \textbf{50\%} \\
    \midrule
    LLaMA-2-7B 
      & PGSVD     & \textbf{6.52} & \textbf{9.20} & 27.46 \\
      (PPL = 5.11)
      & LLM-Pruner    & 7.11 & 13.56 & 31.05 \\
      & SliceGPT      & 6.69 & 11.94 & \textbf{25.84} \\
      & ShortGPT      & 6.98 & 33.21 & 268.11 \\
    \addlinespace[2pt]
    LLaMA-2-13B
      & PGSVD     & \textbf{5.36} & \textbf{7.09} & 18.04 \\
      (PPL = 4.57)
      & LLM-Pruner    & 5.57 & 12.19 & 32.20 \\
      & SliceGPT      & 5.88 & 9.97 & \textbf{10.77} \\
      & ShortGPT      & 5.40 & 30.48 & 187.23 \\
    \bottomrule
  \end{tabular}
\end{table}

We compare PGSVD with other baseline methods including LLM-Pruner~\citep{ma2023llmpruner}, ShortGPT~\citep{men2024shortgpt}, and SliceGPT~\citep{ashkboos2024slicegpt}. ShortGPT is a zero-shot compression method, whereas LLM-Pruner 
leverages gradient information for compression and should therefore be viewed as a stronger baseline. PGSVD, by contrast, performs compression without fine-tuning.


\paragraph{Language Modeling.} As shown in Table~\ref{tab:pruning_comparison}, despite not using gradient information and relying only on a small sample of data for covariance approximation, PGSVD consistently achieves lower perplexity than   LLM-Pruner and ShortGPT. Compared to SliceGPT~\citep{ashkboos2024slicegpt}, PGSVD performs better at lower compression ratios, but is outperformed at the highest compression level (50\%).

\begin{table}
\scriptsize
\centering
\caption{Reasoning performance of PGSVD versus pruning methods for 20\% compression of LLaMA-2-7B.}
\label{tab:prunning_acc}
\begin{tabular}{lcccc}
\toprule
\textbf{Task} & {\color{gray}\textbf{Base}} & \textbf{LLM-Pruner} & \textbf{SliceGPT} & \textbf{PGSVD} \\
\midrule
PIQA        & {\color{gray}78.07} & \textbf{75.95} & 61.26 & 71.27 \\
WinoGrande  & {\color{gray}69.06} & 63.38 & 59.83 & \textbf{64.56} \\
HellaSwag   & {\color{gray}76.00} & \textbf{67.83} & 44.28 & 60.96 \\
ARC-e       & {\color{gray}76.30} & 64.31 & 46.09 & \textbf{70.75} \\
ARC-c       & {\color{gray}43.34} & \textbf{39.93} & 28.41 & 36.52 \\
Average     & {\color{gray}68.55} & \textbf{62.28} & 47.97 & 60.81 \\
\bottomrule
\end{tabular}
\end{table}

\paragraph{Reasoning Tasks.} We also evaluated the accuracy of these methods on multiple reasoning benchmarks, as reported in Table~\ref{tab:prunning_acc}. Despite being applied in a zero-shot manner without any task-specific fine-tuning, PGSVD maintains competitive or superior accuracy compared to LLM-Pruner on most tasks, particularly on ARC-e and WinoGrande, highlighting its robustness for reasoning-oriented evaluations. Although SliceGPT achieves stronger results on perplexity at extreme pruning ratios, its reasoning performance drops sharply at 20\% compression, whereas PGSVD retains a balanced trade-off between compression and reasoning accuracy. Overall, these results confirm that PGSVD preserves both generalization and reasoning capabilities.

\section{Conclusion}

We have presented a framework for the compression of LLMs/VLMs that integrates theory and practice. We have established a loss bound showing how layer compression influences overall network loss. By formulating rank selection as a bi-objective optimization problem, we have theoretically demonstrated that a uniform error assignment across layers yields surrogate Pareto-optimal heterogeneous ranks, simplifying rank search to a single knob. We have further proposed PGSVD to improve activation-aware compression via Pareto-guided rank selection. Empirically, PGSVD has shown consistent performance improvement in various models, increasing accuracy by over 30\% on LLM reasoning tasks. PGSVD has also been effectively generalized to vision–language modeling, achieving up to 40\% compression in zero-shot settings while preserving accuracy.

\section{Limitations}
The PGSVD algorithm relies on uniform-tolerance policies derived from robust assumptions. Future work may investigate learning tolerance thresholds directly from data, while further extensions involve coupling the framework with quantization and lightweight fine-tuning.
\bibliography{main}

\clearpage

\appendix
\section{Proof of Theorem~\ref{th:loss_vs_compress}}
\label{appx:loss_vs_compress_proof}

\begin{proof}
Let $\|\cdot\|$ be the operator (spectral) norm and $\|\cdot\|_F$ the Frobenius norm. 
For batch size $B$, let 
$\mat{X}_l=[\vect{x}_l^{(1)}\,\cdots\,\vect{x}_l^{(B)}]$ and define:
\begin{align}
\vect{z}_l^{(i)} &= \mat{W}_l \vect{x}_l^{(i)}, \nonumber\\
\mat{D}_l^{(i)} &:= 
\operatorname{diag}\!\big(\sigma'(\vect{z}_l^{(i)})\big), \nonumber\\
\mat{J}_l^{(i)} &:= \mat{D}_l^{(i)} \mat{W}_l. \nonumber
\end{align}

Assume $\Delta\mat{X}_1=0$. Using the first-order expansion (dropping second-order terms) for each sample $i$ we have:
\begin{align}
\Delta\vect{x}_{l+1}^{(i)}
\;\approx\;
\mat{D}_l^{(i)}\!\big(\mat{W}_l \Delta\vect{x}_l^{(i)} 
+ \Delta\mat{W}_l \vect{x}_l^{(i)}\big). \nonumber
\end{align}

Assume $\|\mat{D}_l^{(i)}\|\le c$, then we have:
\begin{align}
\|\Delta\vect{x}_{l+1}^{(i)}\|_2 
&\le 
\|\mat{J}_l^{(i)}\|\,\|\Delta\vect{x}_l^{(i)}\|_2 
+ c\,\|\Delta\mat{W}_l \vect{x}_l^{(i)}\|_2. \nonumber
\end{align}

Stack columns and define:
\begin{align}
\mat{G}_l &:= 
\big[\mat{J}_l^{(1)}\Delta\vect{x}_l^{(1)}
\cdots\mat{J}_l^{(B)}\Delta\vect{x}_l^{(B)}\big], \nonumber\\
\mat{Q}_l &:= 
\big[\mat{D}_l^{(1)}\Delta\mat{W}_l\vect{x}_l^{(1)}
\cdots\mat{D}_l^{(B)}\Delta\mat{W}_l\vect{x}_l^{(B)}\big]. \nonumber
\end{align}

Then $\Delta\mat{X}_{l+1} \approx \mat{G}_l + \mat{Q}_l$ and:
\begin{align}
\|\mat{G}_l\|_F^2 
&= \sum_{i=1}^B 
\|\mat{J}_l^{(i)}\Delta\vect{x}_l^{(i)}\|_2^2\nonumber\\
&\le\; 
\Big(\sup_i \|\mat{J}_l^{(i)}\|^2\Big)
\sum_{i=1}^B \|\Delta\vect{x}_l^{(i)}\|_2^2 
\nonumber\\
&= 
\Big(\sup_i \|\mat{J}_l^{(i)}\|\Big)^2 
\|\Delta\mat{X}_l\|_F^2, 
\nonumber\\[2pt]
\|\mat{Q}_l\|_F^2 
&= \sum_{i=1}^B 
\|\mat{D}_l^{(i)}\Delta\mat{W}_l\vect{x}_l^{(i)}\|_2^2\nonumber\\
&\le\; 
c^2 \sum_{i=1}^B \|\Delta\mat{W}_l\vect{x}_l^{(i)}\|_2^2 
\nonumber\\
&= c^2 \|\Delta\mat{W}_l \mat{X}_l\|_F^2. \nonumber
\end{align}

By triangle inequality in Frobenius norm we have:
\begin{align}
\|\Delta\mat{X}_{l+1}\|_F 
&\le 
\|\mat{G}_l\|_F + \|\mat{Q}_l\|_F 
\nonumber\\
&\le 
\mathcal{K}_l\|\Delta\mat{X}_l\|_F 
+ c\,\|\Delta\mat{W}_l \mat{X}_l\|_F, \nonumber
\end{align}

where $\mathcal{K}_l:=\sup_i \|\mat{J}_l^{(i)}\|$ and unrolling for $l=1,\dots,L$ we have:
\begin{align}
\|\Delta\mat{X}_{L+1}\|_F
\;\le\;
\sum_{l=1}^L 
\Biggl(\prod_{m=l+1}^{L} \mathcal{K}_m\Biggr)
c\,\|\Delta\mat{W}_l \mat{X}_l\|_F. \nonumber
\end{align}

For a scalar loss $\mathcal{L}(\mat{Y})$ with 
$\mat{Y}=\mat{X}_{L+1}$, first-order Taylor and Cauchy–Schwarz yield:
\begin{align}
\Delta \mathcal{L} 
\;&\approx\; 
\langle \nabla_{\mat{Y}} \mathcal{L},\, \Delta\mat{Y}\rangle_F\nonumber\\
&\Rightarrow
|\Delta \mathcal{L}| 
\;\le\; 
\|\nabla_{\mat{Y}} \mathcal{L}\|_F \,\|\Delta\mat{Y}\|_F. \nonumber
\end{align}
\end{proof}

\section{Proof of Proposition~\ref{prop:rank_eps_equiv}}
\label{appx:rank_eps_equiv_proof}
\begin{proof}
(P $\Rightarrow$ E) 
Take any feasible ranks $\{r_l\}_{l=1}^L$ for (P). Set $\varepsilon_l:=e_l(r_l)$. 
Since $h_l(\varepsilon_l)$ is the \emph{minimal} parameter count achieving error 
$\le \varepsilon_l$, we have $h_l(\varepsilon_l)\le P_l(r_l)$ for each $l$. 
Thus $\sum_l h_l(\varepsilon_l)\le \sum_l P_l(r_l)\le b$, so 
$\{\varepsilon_l\}$ is feasible for (E), and the (E) objective equals 
$\sum_l \alpha_l\,\varepsilon_l=\sum_l \alpha_l\,e_l(r_l)$, the surrogate objective in (P).
Taking the infimum over feasible $\{r_l\}$ gives $\mathrm{OPT}_{\text{(E)}}\le \mathrm{OPT}_{\text{(P)}}$.

(E $\Rightarrow$ P)
Let $\{\varepsilon_l^\star\}$ be optimal for (E). For each $l$, pick
$r_l^\star\in\arg\min\{P_l(r): e_l(r)\le \varepsilon_l^\star\}$, so that
$P_l(r_l^\star)=h_l(\varepsilon_l^\star)$ and $e_l(r_l^\star)\le \varepsilon_l^\star$.
Then $\sum_l P_l(r_l^\star)=\sum_l h_l(\varepsilon_l^\star)\le b$, so $\{r_l^\star\}$ is
feasible for (P), and
\[
\mathrm{OPT}_{\text{(P)}}=\sum_{l=1}^L \alpha_l\,e_l(r_l^\star)\ \le\ \sum_{l=1}^L \alpha_l\,\varepsilon_l^\star
=\mathrm{OPT}_{\text{(E)}}.
\]
Hence $\mathrm{OPT}_{\text{(P)}}\le \mathrm{OPT}_{\text{(E)}}$. Combining both directions yields equality.
\end{proof}

\section{Proof of Lemma~\ref{th:uniform_eps}}
\label{appx:uniform_eps_proof}
\begin{proof}
Write $\sum_l(\cdot)$ for $\sum_{l=1}^L(\cdot)$.
By assumption, $\underline h,\overline h:[0,1]\to\mathbb{R}_{\ge0}$ are
nonincreasing, convex, and bound each $h_l$ pointwise. The budget range
$L\,\overline h(1)\le b\le L\,\overline h(0)$ ensures feasibility.

\smallskip\noindent
\emph{(i) Symmetric surrogate $\overline h$.}
Consider
$\min_{\varepsilon\in[0,1]^L}\sum_l\varepsilon_l$
s.t. $\sum_l\overline h(\varepsilon_l)\le b$.
This program is convex; Slater holds for interior $b$, hence KKT are
necessary and sufficient. Let $\mu\ge0$ be the multiplier for the
coupling constraint, $u_l,v_l\ge0$ for $0\le\varepsilon_l\le1$, and
$g_l\in\partial\overline h(\varepsilon_l)$. KKT:
(i) stationarity $1+\mu g_l-u_l+v_l=0$,
(ii) complementary slackness $\mu\big(\sum_l\overline h(\varepsilon_l)-b\big)=0$,
$u_l\varepsilon_l=0$, $v_l(\varepsilon_l-1)=0$,
(iii) primal/dual feasibility.
Since $\overline h$ is strictly decreasing and $b$ is nondegenerate,
the coupling constraint is active, so $\mu>0$.
If $0<\varepsilon_l<1$ then $u_l=v_l=0$ and
$1+\mu g_l=0$. Monotonicity of $\partial\overline h$ (convexity) forces
all interior $\varepsilon_l$ to be equal. If some coordinates were at
the boundary while others interior, stationarity with $\mu>0$ and the
monotonicity of $\partial\overline h$ would conflict with the active
budget unless all $\varepsilon_l$ coincide (up to kinks where subgradients
are intervals). Hence there is a uniform optimizer
$\varepsilon_l\equiv\varepsilon^\star$ determined by
$L\,\overline h(\varepsilon^\star)=b$ (unique on the active segment by
continuity and strict decrease).

\smallskip\noindent
\emph{(ii) Minimax optimality.}
Let $\mathcal H$ be the admissible families with
$\underline h\le h_l\le\overline h$ pointwise. For any
$\{h_l\}\in\mathcal H$ and any $\varepsilon$,
$\sum_l h_l(\varepsilon_l)\le\sum_l\overline h(\varepsilon_l)$, hence
$\{\varepsilon:\sum_l\overline h(\varepsilon_l)\le b\}
\subseteq\{\varepsilon:\sum_l h_l(\varepsilon_l)\le b\}$.
Minimizing the same linear objective over a smaller set yields a larger
(optimal) value, so the worst case is attained at $h_l\equiv\overline h$.
By (i), its optimizer is uniform. Thus the uniform solution is minimax.

\smallskip\noindent
\emph{(iii) Bracketing.}
Define $\varepsilon^{\mathrm u},\varepsilon^{\ell}\in[0,1]$ by
$L\,\overline h(\varepsilon^{\mathrm u})=b$ and
$L\,\underline h(\varepsilon^{\ell})=b$
(use generalized inverses at flats). The uniform vector
$(\varepsilon^{\mathrm u},\dots,\varepsilon^{\mathrm u})$ is feasible
for the true problem since $h_l\le\overline h$, hence
$p\le \alpha L\,\varepsilon^{\mathrm u}$. For the lower
bound, replace each $h_l$ by $\underline h$, which enlarges the feasible
set; the relaxed problem is convex and, by the same KKT argument, has a
uniform optimum at $\varepsilon^{\ell}$, yielding
$\alpha L\,\varepsilon^{\ell}\le p$. Combining gives
$\alpha L\,\varepsilon^{\ell}\le p\le \alpha L\,\varepsilon^{\mathrm u}$.
\end{proof}

\section{Homogeneous Assumption}
\label{appx:h_assumption}
\begin{remark}[Homogeneous sensitivities as a minimax-robust surrogate]
\label{rem:robust_alpha_minmax}
Recall problem \textnormal{(E)} defined as $\min_{0\le \varepsilon_l\le1}\sum_{l}\alpha_l\varepsilon_l$
s.t. $\sum_{l}h_l(\varepsilon_l)\le b$. 
When $\alpha_l$ are uncertain, consider the robust counterpart
\[
\min_{\varepsilon}\max_{\alpha\in\mathcal U}\sum_{l}\alpha_l\varepsilon_l,
\quad 
\mathcal U=\{\alpha:\underline\alpha_l\le\alpha_l\le\overline\alpha_l\}.
\]
For any feasible $\varepsilon$, 
$\max_{\alpha\in\mathcal U}\sum_{l}\alpha_l\varepsilon_l
=\sum_{l}\overline\alpha_l\varepsilon_l$ since $\varepsilon_l\!\ge\!0$.
If the bounds are common ($\overline\alpha_l=\overline\alpha$), this reduces
exactly to the homogeneous-$\alpha$ objective 
$\overline\alpha\sum_{l}\varepsilon_l$. 
If they differ, letting $\bar\alpha=\max_l\overline\alpha_l$ gives
$\sum_{l}\overline\alpha_l\varepsilon_l\le
\bar\alpha\sum_{l}\varepsilon_l$, so minimizing
$\bar\alpha\sum_{l}\varepsilon_l$ is \emph{minimax-safe}.
Thus, $\alpha_l\equiv\bar\alpha$ provides a principled robust surrogate for \textnormal{(E)}.
\end{remark}

\section{Proof of Theorem~\ref{th:pareto_front}}
\label{appx:uniform_eps_pareto_proof}
\begin{proof}
Fix a uniform tolerance $\varepsilon\in[0,1]$ and let 
$r_l(\varepsilon)$ be the minimal SVD rank per layer achieving
$e_l\!\big(r_l(\varepsilon)\big)\le\varepsilon$ (by EYM).
Set
\begin{align}
H(\varepsilon)&:=\sum_{l=1}^L P_l\!\big(r_l(\varepsilon)\big),\nonumber\\
A(\varepsilon)&:=\sum_{l=1}^L \alpha_l\,e_l\!\big(r_l(\varepsilon)\big).\nonumber
\end{align}

By the Rank–$\varepsilon$ Allocation Equivalence (Prop.~\ref{prop:rank_eps_equiv}),
the constrained surrogate problem “minimize $\sum_l\alpha_l e_l(r_l)$
subject to $\sum_l P_l(r_l)\le b$” is equivalent to its $\varepsilon$-formulation.
Under the homogeneous sensitivity and bounded-profile assumption
(Theorem~\ref{th:uniform_eps}), the optimal solution of this surrogate
for any budget $b$ is attained by a \emph{uniform} tolerance across layers.
Choosing $b:=H(\varepsilon)$, the uniform choice at level $\varepsilon$
is therefore optimal among all feasible allocations with total parameters
$\le H(\varepsilon)$, i.e., there is no other feasible allocation that
simultaneously achieves $\sum_l P_l(r_l)\le H(\varepsilon)$ and
$\sum_l \alpha_l e_l(r_l)<A(\varepsilon)$. Hence
$\big(H(\varepsilon),A(\varepsilon)\big)$ is (surrogate) Pareto-optimal for (B).

If $\varepsilon,\varepsilon'$ induce the same rank vector
$\big(r_l(\varepsilon)\big)_l=\big(r_l(\varepsilon')\big)_l$, then
$H(\varepsilon)=H(\varepsilon')$ and
$A(\varepsilon)=A(\varepsilon')$ by definition, so they correspond to the same
frontier point.
\end{proof}

\section{Derivation of ALS Updates}
\label{appx:als_derivation}
We minimize
\begin{align}
\label{eq:als-obj}
\min_{\mat{A},\mat{B}}
   \;\; \|\mat{W}\mat{X}-\mat{A}\mat{B}\mat{X}\|_F^2.
\end{align}
Let $\mat{M}=\mat{X}\mat{X}^\top$ and using the Frobenius identity
$\|\mat{Y}\|_F^2=\mathrm{tr}(\mat{Y}\mat{Y}^\top)$, we write
\begin{align}
&\mathrm{tr}\!\left((\mat{W}-\mat{A}\mat{B})\mat{M}
(\mat{W}-\mat{A}\mat{B})^\top\right) \nonumber \\
&= \mathrm{tr}(\mat{W}\mat{M}\mat{W}^\top)
 -2\,\mathrm{tr}(\mat{A}\mat{B}\mat{M}\mat{W}^\top) \nonumber \\
&\quad + \mathrm{tr}(\mat{A}\mat{B}\mat{M}\mat{B}^\top\mat{A}^\top). \nonumber
\end{align}

\paragraph{Update for $\mat{A}$.}
\begin{align}
\frac{\partial}{\partial \mat{A}}
 = -2\mat{B}\mat{M}\mat{W}^\top
   +2\mat{A}\mat{B}\mat{M}\mat{B}^\top. \nonumber
\end{align}
Setting to zero:
\begin{align}
\mat{A}
 = \mat{W}\mat{M}\mat{B}^\top
   (\mat{B}\mat{M}\mat{B}^\top)^{\dagger}. \nonumber
\end{align}

\paragraph{Update for $\mat{B}$.}
\begin{align}
\frac{\partial}{\partial \mat{B}}
 = -2\mat{A}^\top\mat{W}\mat{M}
   +2\mat{A}^\top\mat{A}\mat{B}\mat{M}. \nonumber
\end{align}
Setting to zero:
\begin{align}
\mat{B}
 = (\mat{A}^\top\mat{A})^{\dagger}\mat{A}^\top\mat{W}. \nonumber
\end{align}

\paragraph{ALS Updates.}
\begin{align}
\boxed{
\begin{aligned}
\mat{A}&=\mat{W}\mat{M}\mat{B}^\top
          (\mat{B}\mat{M}\mat{B}^\top)^{\dagger}, \\
\mat{B}&=(\mat{A}^\top\mat{A})^{\dagger}\mat{A}^\top\mat{W}.
\end{aligned}}
\end{align}

\begin{figure*}[t]
  \centering
  \begin{minipage}{0.5\linewidth}
    \centering
    \includegraphics[width=\linewidth]{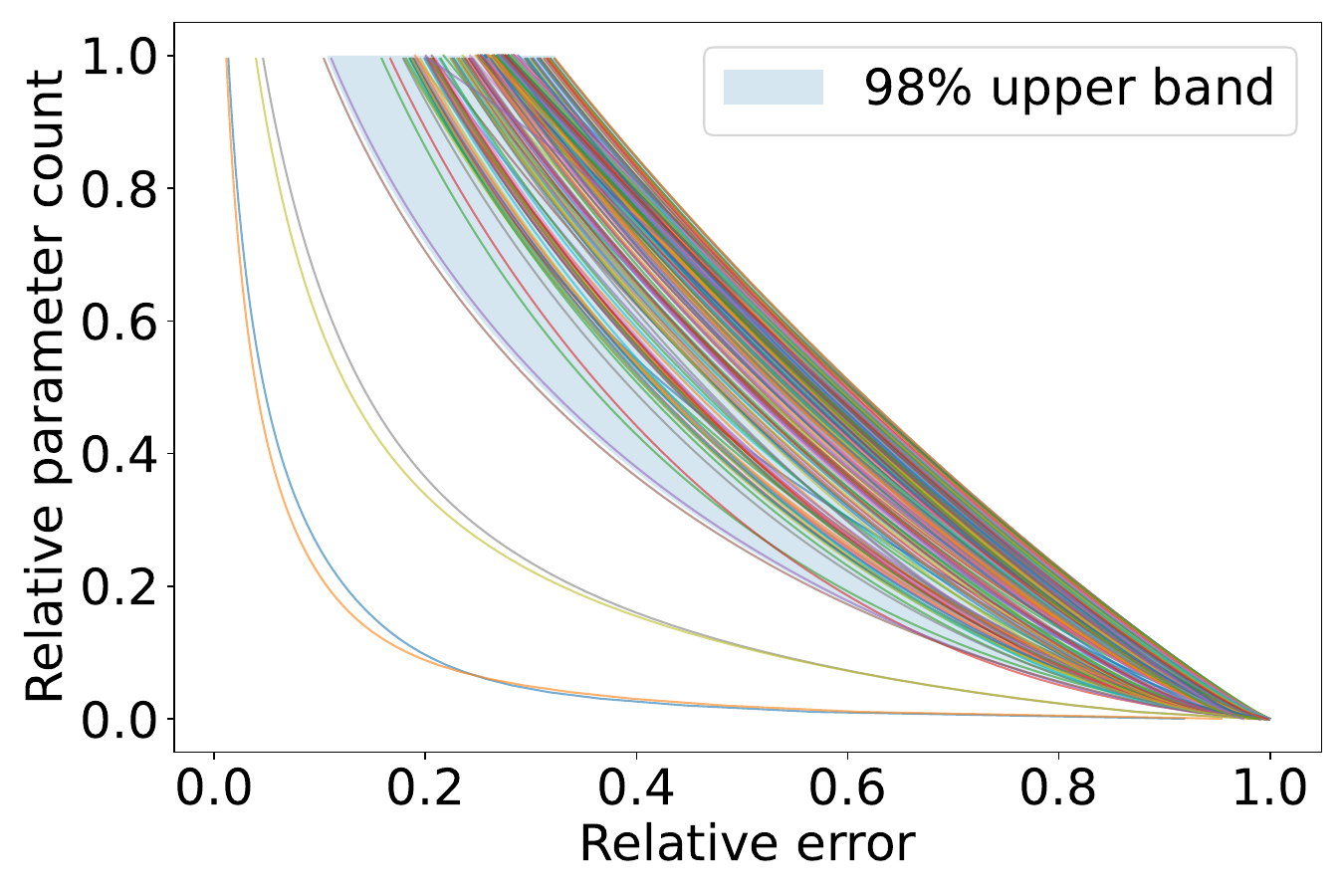}
  \end{minipage}\hfill
  \begin{minipage}{0.5\linewidth}
    \centering
    \includegraphics[width=\linewidth]{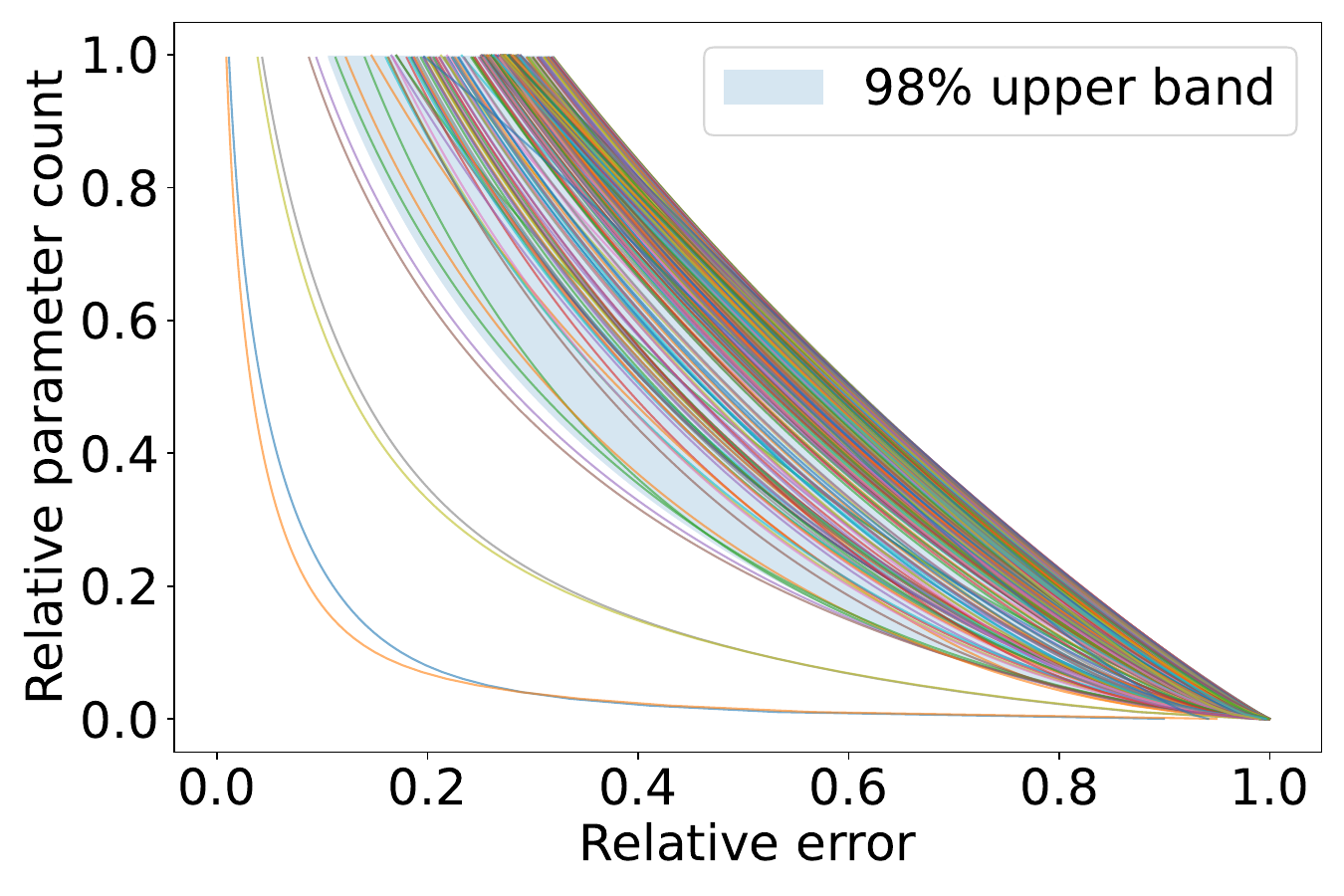}
  \end{minipage}
  \captionsetup{aboveskip=0pt}
  \caption{SVD profiles for LLaMA-2 7B (left) and 13B (right).}
  \label{fig:svd_profile}
\end{figure*}

\begin{table*}[t]
\scriptsize
\centering
\setlength{\tabcolsep}{4pt}
\renewcommand{\arraystretch}{1.15}
\caption{Zero-shot accuracy across tasks for LLaMA-2-7B and Mistral-7B at 20\% and 40\% compression (bold indicates the best accuracy per task among compressed methods).}
\label{tab:additional_reasoning}
\begin{tabular}{lll|cccccc|c}
\toprule
\textbf{Model} & \textbf{Compression} & \textbf{Method} 
& \textbf{ARC-C} & \textbf{BoolQ} & \textbf{HellaSwag} & \textbf{MathQA} & \textbf{MMLU} & \textbf{Race} & \textbf{Avg} \\
\midrule
{LLaMA-2-7B} & {} & {\color{gray}Base} & {\color{gray}43.34} & {\color{gray}77.71} & {\color{gray}76.00} & {\color{gray}28.07} & {\color{gray}41.82} & {\color{gray}39.52} & {\color{gray}51.08} \\
\cmidrule(lr){2-10}
 & 20\% & SVD-LLM   & 34.39 & 63.09 & 59.19 & 25.33 & 26.20 & 34.55 & 40.46 \\
 &      & SVD-ALS   & 33.70 & \textbf{68.01} & 59.55 & 24.96 & 26.64 & 33.97 & 41.14 \\
 &      & PGSVD & \textbf{36.52} & 67.43 & \textbf{60.96} & \textbf{25.76} & \textbf{27.48} & \textbf{35.02} & \textbf{42.20} \\
\cmidrule(lr){2-10}
 & 40\% & SVD-LLM   & 22.95 & 43.21 & 41.73 & 22.24 & 23.34 & \textbf{28.80} & 30.83 \\
 &      & SVD-ALS   & 22.87 & 42.57 & 41.74 & 22.38 & \textbf{23.54} & 27.85 & 30.16 \\
 &      & PGSVD & \textbf{23.63} & \textbf{56.12} & \textbf{43.42} & \textbf{22.65} & 23.29 & 28.61 & \textbf{32.62} \\
\midrule
{Mistral-7B} & {} & {\color{gray}Base} & {\color{gray}50.43} & {\color{gray}83.61} & {\color{gray}81.08} & {\color{gray}35.71} & {\color{gray}59.64} & {\color{gray}40.86} & {\color{gray}58.56} \\
\cmidrule(lr){2-10}
 & 20\% & SVD-LLM   & 35.15 & \textbf{67.74} & 59.60 & 28.54 & \textbf{28.53} & 35.50 & 42.51 \\
 &      & SVD-ALS   & 36.35 & 65.81 & 59.74 & 28.38 & 27.99 & 35.50 & 42.30 \\
 &      & PGSVD & \textbf{37.71} & 66.21 & \textbf{62.26} & \textbf{29.25} & 28.50 & \textbf{36.94} & \textbf{43.48} \\
\cmidrule(lr){2-10}
 & 40\% & SVD-LLM   & \textbf{21.67} & 38.62 & 37.87 & 22.18 & 23.38 & 27.08 & 28.47 \\
 &      & SVD-ALS   & 20.82 & 38.41 & 37.87 & \textbf{22.75} & 23.28 & 27.18 & 28.39 \\
 &      & PGSVD & 21.42 & \textbf{39.02} & \textbf{39.12} & 22.35 & \textbf{23.44} & \textbf{27.46} & \textbf{28.80} \\
\bottomrule
\end{tabular}
\end{table*}



\begin{table}[t]
\scriptsize
\centering
\setlength{\tabcolsep}{4pt}
\renewcommand{\arraystretch}{1.15}
\caption{Perplexity (PPL) and zero-shot accuracy (\%) comparison on WikiText and downstream tasks for 20\% compression of LLaMA-3.1-8B.}
\label{tab:llama_3_8B_results}
\begin{tabular}{l|ccc}
\toprule
\textbf{Metric} 
& {\color{gray}\textbf{Base}} 
& \textbf{SVD-LLM} 
& \textbf{PGSVD} \\
\midrule
PPL (Wiki) $\downarrow$ 
& {\color{gray}5.84} 
& 15.51 
& \textbf{14.31} \\

Wino 
& {\color{gray}73.56} 
& 61.88 
& \textbf{62.12} \\

ARC-E 
& {\color{gray}81.44} 
& 59.76 
& \textbf{63.68} \\

PIQA 
& {\color{gray}80.09} 
& 65.83 
& \textbf{68.39} \\

CSQA 
& {\color{gray}71.66} 
& 20.07 
& \textbf{23.91} \\

Lambada 
& {\color{gray}67.36} 
& 35.22 
& \textbf{36.81} \\

\bottomrule
\end{tabular}
\end{table}

\section{Convex Envelope of SVD Profiles} 
\label{appx:svd_profile_env}
The derivation of Pareto-guided ranks assumes that the SVD profiles (i.e., the rank–error relationships) are convexly bounded (Lemma~\ref{th:uniform_eps}).
Figure~\ref{fig:svd_profile} empirically supports this assumption. In Fig.~\ref{fig:svd_profile}, every single line corresponds to the SVD profile ($h_l$) of a specific layer $l$ of each network for all layers. It is observed that the relative SVD profiles naturally bounded by a convex upper bound. Although the uniform error-allocation strategy in PGSVD relies on the upper convex bound of the SVD profiles, the lower bound provides an indication of how close the achieved solution may be to the true optimum. As shown in Fig.~\ref{fig:svd_profile}, after excluding a small fraction of outlier layers, the lower bound of the remaining 98\% of layers approach the upper bound, suggesting that the practical rank–error distributions closely follow the assumed convex envelope.

\section{Reasoning Tasks.} 
\label{app:reas_tasks}
For downstream evaluation, we used the LM Evaluation Harness and studied the performance of the models on the benchmarks ARC-Easy~\citep{clark2018arc}, CommonsenseQA~\citep{talmor2019commonsenseqa}, PIQA~\citep{bisk2020piqa}, RACE~\citep{lai2017race}, Winogrande~\citep{sakaguchi2020winogrande} and LAMBADA-Standard~\citep{paperno2016lambada} listed in Table~\ref{tab:language_results}. Table~\ref{tab:additional_reasoning} reported additional results on other reasoning tasks including ARC-Challenge~\citep{clark2018arc}, BoolQ~\citep{clark2019boolq}, MMLU~\citep{hendrycks2021mmlu}, HellaSwag~\citep{zellers2019hellaswag}, MathQA~\citep{zellers2019hellaswag}, and Race~\citep{lai2017race}.

\end{document}